\newcounter{edge}
\theoremstyle{plain}
\newtheorem{defi}{Definition}
\newtheorem{algo}[defi]{Algorithm}
\def\a{\ensuremath\mathrm{a}}
\def\b{\ensuremath\mathrm{b}}
\def\c{\ensuremath\mathrm{c}}
\def\d{\ensuremath\mathrm{d}}
\def\e{\ensuremath\mathrm{e}}
\def\x{\ensuremath\mathrm{x}}
\def\y{\ensuremath\mathrm{y}}
\def\z{\ensuremath\mathrm{z}}
\def\X{\ensuremath\mathbf{x}}
\def\Y{\ensuremath\mathbf{y}}
\def\Z{\ensuremath\mathbf{z}}
\newcommand{\bitvec}[1]{\mathbf{#1}}
\newcommand{\ev}{\textit{ev}}
\newcommand{\out}{\mathrm{out}}
\newcommand{\leftdot}[1]{\prescript{\bullet}{}{#1}}
\newcommand{\rightdot}[1]{{#1}^\bullet}
\newcommand{\id}{\mathrm{id}}
\newcommand{\Id}{\mathrm{Id}}
\newcommand{\failop}{\mathrm{fail}}
\newcommand{\norm}{\mathrm{norm}}
\newcommand{\nat}{\mathbb{N}}
\newcommand{\heading}[1]{\subsection{#1}}
\newsavebox{\@brx}
\newcommand{\llangle}[1][]{\savebox{\@brx}{\(\m@th{#1\langle}\)}%
  \mathopen{\copy\@brx\kern-0.5\wd\@brx\usebox{\@brx}}}
\newcommand{\rrangle}[1][]{\savebox{\@brx}{\(\m@th{#1\rangle}\)}%
  \mathclose{\copy\@brx\kern-0.5\wd\@brx\usebox{\@brx}}}
\newcommand{\short}[1]{}
\newcommand{\full}[1]{#1}
\newcommand{\cbox}[1]{\vspace{0.2cm}\noindent
  \fbox{\parbox{.97\textwidth}{#1}}\vspace{0.2cm}}
\newcolumntype{C}{>{\Centering\arraybackslash}X}
\newcommand{\exclude}[1]{{}}
\title{Uncertainty Reasoning for Probabilistic Petri Nets via Bayesian
  Networks}
\author{Rebecca Bernemann}{University of Duisburg-Essen, Germany}{rebecca.bernemann@uni-due.de}{}{}
\author{Benjamin Cabrera}{University of Duisburg-Essen, Germany}{benjamin.cabrera@uni-due.de}{}{}
\author{Reiko Heckel}{University of Leicester, UK}{rh122@leicester.ac.uk}{}{}
\author{Barbara König}{University of Duisburg-Essen,
  Germany}{barbara\_koenig@uni-due.de}{}{}
\authorrunning{R. Bernemann and B. Cabrera and R. Heckel and B. König}
\keywords{uncertainty reasoning, probabilistic knowledge, Petri nets,
  Bayesian networks}
\begin{document}

\sloppy

\maketitle
\pagenumbering{roman}
\setcounter{page}{2}

\pagenumbering{arabic}

\begin{abstract}
  This paper exploits extended Bayesian networks for uncertainty
  reasoning on Petri nets, where firing of transitions is
  probabilistic. In particular, Bayesian networks are used as symbolic
  representations of probability distributions, modelling the
  observer's knowledge about the tokens in the net. The observer
  can study the net by monitoring successful and failed steps.

  An update mechanism for Bayesian nets is enabled by relaxing some of
  their restrictions, leading to modular Bayesian nets that can
  conveniently be represented and modified.  As for every symbolic
  representation, the question is how to derive information -- in this
  case marginal probability distributions -- from a modular Bayesian
  net. We show how to do this by generalizing the known method of
  variable elimination.
  The approach is illustrated by examples about the spreading of
  diseases (SIR model) and information diffusion in social
  networks. We have implemented our approach and provide runtime
  results.
  \todo[inline]{\textbf{FSTTCS Rev1:} it seems that the 
  contribution of [4] and of this paper together is more 
  on improvement for MBN techniques than for stochastic nets themselves. Presented this way, it is not sure that this work will find an appropriate audience at FSTTCS.}
\end{abstract}

\section{Introduction}
\label{sec:introduction}

Today's software systems and the real-world processes they support are often distributed, with agents acting independently based on their own local state but without complete knowledge of the global state. E.g., a social network may expose a partial history of its users' interactions while hiding their internal states. An application tracing the spread of a virus can record test results but not the true  infection state of its subjects. Still, in both cases, we would like to derive knowledge under uncertainty to allow us, for example, to predict the spread of news in the social network or trace the outbreak of a virus.

Using Petri nets as a basis for modelling concurrent systems, 
our aim is to perform uncertainty reasoning on Petri nets, employing
Bayesian networks as compact representations of probability
distributions.
\todo[inline]{\textbf{CONCUR Rev1:} The paper completely misses the motivation of the topic.
Also, there are several issues with the descriptions.\\
- Most of the additional text in brackets can also be written as normal text.\\
- Several definitions are adapted from the previous paper but these modifications 
do not become clear from the text.\\
Overall the paper requires a substantial amount of preconception in the used concepts.
An early single example for all concepts and computation steps would help the reader 
a lot to understand the used concepts.}
\todo[inline]{\textbf{Ba:} Start with applications and motivate our
  contribution from there.}
%
% Our aim is to perform uncertainty reasoning on Petri nets, using
% Bayesian networks as compact representations of probability
% distributions.
%
Assume that we are observing a discrete-time concurrent system
modelled by a Petri net.  The net's structure is known, but its
initial state is uncertain, given only as an a-priori probability
distribution on markings.  The net is probabilistic: Transitions are
chosen at random, either from the set of enabled transitions or
independently, based on probabilities that are known but may change
between steps.  We cannot observe which transition actually fires, but
only if firing was successful or failed.  Failures occur if the chosen
transition is not enabled under the current marking (in the case where
we choose transitions independently), if no transition can fire, or if
a special fail transition is chosen.
After observing the system for a number
of steps, recording a sequence of ``success'' and ``failure''
events, we then determine a marginal distribution on the markings
(e.g., compute the probability that a given place is marked), taking into account
all observations.
\todo{\textbf{FSTTCS Rev1:} the semantics of stochastic nets is unusual : transitions are chosen at random and may or may not be firable, leading to a success or a failure, which is the only 
information available to an observer. Where does this semantics originate from, and what is the justification for it ?}

First, we set up a framework for uncertainty reasoning based on
time-inhomogeneous Markov chains that formally describes this
scenario, parameterized over the specific semantics of the
probabilistic net. This encompasses the well-known stochastic Petri
nets \cite{m:stochastic-petri}, as well as a semantics where the
choice of the marking and the transition is
independent. \short{(Sct.~\ref{sec:markov-ce-nets}
  and~\ref{sec:uncertainty-reasoning})}
Using basic Bayesian reasoning (reminiscent of methods used for
hidden Markov models \cite{r:hidden-markov-models}), it is
conceptually relatively straightforward to update the probability
distribution based on the acquired knowledge. However, the probability
space is exponential in the number of places of the net and hence
direct computations become infeasible relatively quickly.

Following \cite{chhk:update-ce-nets-bayesian}, our solution is to use
(modular) Bayesian networks
\cite{wbk:BN-app,d:modeling-reasoning-bn,p:bayesian-networks} as
compact symbolic representations of probability distributions. Updates
to the probability distribution can be performed very efficiently on
this data structure, simply by adding additional nodes. By analyzing
the structure of the Petri net we ensure that this node has a minimal
number of connections to already existing
nodes. \short{(Sct.~\ref{sec:bn-props} and~\ref{sec:updating-bn})}

As for every symbolic representation, the question is how to derive
information, in this case marginal probability distributions. We solve
this question by generalizing the known method of variable elimination
\cite{d:bucket-elimination,d:modeling-reasoning-bn} to modular
Bayesian networks. This method is known to work efficiently for
networks of small treewidth, a fact that we experimentally verify in
our implementation. \short{(Sct.~\ref{sec:variable-elimination}
  and~\ref{sec:implementation})}

We consider some small application examples modelling gossip and
infection spreading.

Summarized, our contributions are:

\begin{itemize}
\item We propose a framework for uncertainty reasoning based on
  time-inhomogeneous Markov chains, parameterized over different types
  of probabilistic Petri nets (Sct.~\ref{sec:markov-ce-nets}
  and~\ref{sec:uncertainty-reasoning}).

\item We use modular Bayesian networks to symbolically represent and
  update probability distributions (Sct.~\ref{sec:bn-props}
  and~\ref{sec:updating-bn}).

\item We extend the variable elimination method to modular Bayesian
  networks and show how it can be efficiently employed in order to
  compute marginal distributions
  (Sct.~\ref{sec:variable-elimination}). This is corroborated by our
  implementation and runtime results (Sct.~\ref{sec:implementation}).
\end{itemize}

\full{All proofs and further material can be found in the
  appendix.}\short{All proofs and further material can be found in the
  full version \cite{bchk:uncertainty-reasoning-arxiv}.}

\section{Markov Chains and Probabilistic Condition/Event Nets}
\label{sec:markov-ce-nets}

% To reason about uncertainty in probabilistic Petri nets, we consider the
% Markov chain associated with the probabilistic net.

% We first introduce preliminaries on Markov chains before discussing
% Petri nets. While the basic definitions are standard, our approach to
% uncertainty reasoning on Petri nets appears to be original.

\heading{Markov Chains}

Markov chains \cite{GS:Markov,t:introduction-markov-chains} are a
stochastic state-based model, in which the probability of a transition
depends only on the state of origin. Here we restrict to a finite
state space.

\begin{defi}[Markov chain]
  Let $Q$ be a finite state space. A \emph{(discrete-time) Markov
    chain} is a sequence $(X_n)_{n\in\nat_0}$ of random variables such
  that for $q,q_0,\dots,q_n\in Q$:
  \[ P(X_{n+1}=q\mid X_n=q_n) = P(X_{n+1}=q\mid
    X_n=q_n,\dots,X_0=q_0). \] 
\end{defi}

Assume that $|Q|=k$. Then, the probability distribution over $Q$ at
time~$n$ can be represented as a $k$-dimensional vector $p^n$, indexed
over $Q$. We abbreviate $p^n(q) = P(X_n=q)$.  We define
$k\times k$-transition matrices $P^n$, indexed over $Q$, with
entries\footnote{We are using the notation $M(q'\mid q)$, resembling
  conditional probability,} for the entry of matrix $M$ at row~$q'$
and column~$q$:  $P^n(q'\mid q) = P(X_{n+1}=q'\mid X_n=q)$. Note that
$p^{n+1} = P^n\cdot p^n$.
We do not restrict to time-homogeneous Markov chains where it is
required that $P^n = P^{n+1}$ for all $n\in\nat_0$. Instead, the
probability distribution on the transitions might vary over time.

% Naturally, we are interested in computing $p^n$, but also in the
% probability that we were in a certain state $q$ at step~$n$ assuming
% that we are in a given set $Q'$ of states at step~$n+1$.

% \todo[inline]{Ba: We could omit this proposition, if we need space.}

% \begin{restatable}{prop}{PropMarkovReachQPrime}
%   \label{prop:markov-reach-q-prime}
%   Let $Q'\subseteq Q$. We define a diagonal matrix $F^n$, indexed over
%   $Q$ as follows:
%   $F^n(\bar{q}\mid \bar{q}) = \sum_{q'\in Q'} P^n(q'\mid \bar{q})$
%   (all other entries are $0$).  Then
%   $P(X_n = q \land X_{n+1} \in Q') = (F^n\cdot p^n)(q)$.
% \end{restatable}

% Note that $F\cdot p^n$ is usually not a probability distribution,
% since it is not normalized (the sum over all $Q$ is in general smaller
% than $1$). By normalizing it we obtain
% $P(X_n = q \mid X_{n+1} \in Q')$.

% This is in particular interesting if $Q'$ is a set of fail states and
% we ask what is the probability of being in a certain state $q$,
% provided we know that a transition fails.

%\medskip

\heading{Probabilistic Condition/Event Nets}

As a basis for probabilistic Petri nets we use the following variant
of condition/event nets \cite{r:petri-nets}. Deviating from
\cite{r:petri-nets}, we omit the initial marking and furthermore the
fact that the post-condition is marked is not inhibiting the firing of
a transition. That is, we omit the so-called contact condition, which
makes it easier to model examples from application scenarios where the
contact condition would be unnatural. Note however that we could
easily accommodate the theory to include this condition, as we did in
the predecessor paper \cite{chhk:update-ce-nets-bayesian}.

\begin{defi}[condition/event net]
  \label{def:petri_net}
  \todo{\textbf{CONCUR Rev2:} (Def2/3) How/where is \$ m \textbackslash not
    \textbackslash rightarrow \textasciicircum t \$ defined? Ba: done}  A
  \emph{condition/event net} (C/E net or simply Petri net)
  $N = (S, T, \leftdot{()}, \rightdot{()})$ is a four-tuple consisting
  of a finite set of \emph{places} $S$, a finite set of
  \emph{transitions} $T$ with \emph{pre-conditions}
  $\leftdot{()}: T \rightarrow \mathcal{P}(S)$ and
  \emph{post-conditions}
  $\rightdot{()}: T \rightarrow \mathcal{P}(S)$.  A \emph{marking} is
  any subset of places \(m\subseteq S\) and will also be represented
  by a bit string $m\in \{0,1\}^{|S|}$ (assuming an ordering on the
  places).

  A transition $t$ can \emph{fire} for a marking $m \subseteq S$ if
  $\leftdot{t} \subseteq m$. Then marking $m$ is transformed into
  $m' = (m \setminus \leftdot{t}) \cup \rightdot{t}$, written
  $m \overset{t}{\Rightarrow} m'$.
  We write $m \overset{t}{\Rightarrow}$ to indicate that there exists
  some $m'$ with $m \overset{t}{\Rightarrow} m'$ and
  $m\not\overset{t}{\Rightarrow}$ if this is not the case. 
  We denote the \emph{set of all markings} by
  $\mathcal{M} = \mathcal{P}(S)$.
\end{defi}
\todo{\textbf{FSTTCS Rev1:} the semantics of your nets is given as $m'= m \setminus  pre(t) \cup post(t)$. What happens if some place p is marked in $m \setminus  pre(t)$ and appears in post(t) ? 
In [4] the interpretation is that t cannot fire and this leads to a fail state. Here this case is not addressed. Note that m' is defined even in this case, so it seems that 
a marked place in post(t) does not forbid t, and remains marked with a single token.  
The standard approach if you want to keep binary markings is to use an
elementary semantics for nets, that forbids t. Ba: done}

In order to obtain a Markov chain from a C/E net, we need the
following data: given a marking $m$ and a transition $t$, we denote by
$r_n(m,t)$ the probability of firing $t$ in marking $m$ (at step~$n$),
and by $r_n(m,\failop)$ the probability of going directly to a fail state
$*$.

\begin{defi}
  Let $N = (S, T, \leftdot{()}, \rightdot{()})$ be a condition/event
  net and let $T_f = T\cup\{\failop\}$ (the set of transitions
  enriched with a fail transition). Furthermore let
  $r_n\colon \mathcal{M}\times T_f\to [0,1]$, $n\in\nat_0$ be a family
  of functions (the transition distributions at step~$n$), such that
  for each $n\in\nat_0$, $m\in\mathcal{M}$:
  $\sum_{t\in T_f} r_n(m,t) = 1$.
  % The value $r_n(m,t)$ specifies the probability that $t$ fires in
  % marking $m$ at step~$n$.

  The \emph{Markov chain generated from $N, r_n$} has states
  $Q = \mathcal{M}\cup \{*\}$ and for $m,m'\in\mathcal{M}$:
  \begin{center}
    $P(X_{n+1} = m' \mid X_n = m) = \sum_{t\in
      T,m\overset{t}{\Rightarrow} m'} r_n(m,t)$ \qquad
    $P(X_{n+1} = m' \mid X_n = *) = 0$ \\
    $P(X_{n+1} = * \mid X_n = m) = \sum_{t\in
      T_f,m\not\overset{t}{\Rightarrow}} r_n(m,t)$ \qquad
    $P(X_{n+1} = * \mid X_n = *) = 1$
  \end{center}
  \todo{\textbf{FSTTCS Rev1:} This sentence is ambiguous. From the
    former explanations, failure occurs when a transition is selected
    and is not firable from the current marking. So fail can occur as
    soon as some transition is not firable from the set of
    markings. There must be a mistake. Ba: done}
  where we assume that $m\not\overset{\failop}{\Rightarrow}$
  \todo{\textbf{CONCUR Rev1:} \textbackslash not \textbackslash rightarrow not clear}
  for every $m\in\mathcal{M}$.
\end{defi}
\todo[inline]{\textbf{FSTTCS Rev1:} is there a reason for your
  success/failure semantics ? Does it simplify the MBN approach
  described later in the paper ? Or is it just a design choice ?}

Note that we can make a transition from $m$ to the fail state $*$
either when there is a non-zero probability for performing such a
transition directly or when we pick a transition that cannot be fired
in $m$. Requiring that $m\not\overset{\failop}{\Rightarrow}$ for every
$m$ is for notational convenience, since we have to sum up all
probabilities leading to the fail state $*$ to compute
$P(X_{n+1}=*\mid X_n=m)$. In this way the symbol $\not\Rightarrow$
always signifies a transition to $*$.

By parametrising over $r_n$ we obtain different semantics for
condition/even nets. In particular, we consider the following two
probabilistic semantics, both based on probability distributions
$p^n_T\colon T\to [0,1]$, $n\in\nat_0$ \todo{\textbf{FSTTCS Rev1:}
  Where does $p^n_T$ originate from? Is it fixed by the environment
  or do we have $p^n_T$ identical for every n? Ba: done}  on transitions. We
work under the assumption that this information is given or can be
gained from extra knowledge that we have about our environment.

\smallskip

\noindent\textsf{\textbf{Independent case:}} Here we assume that the
marking and the transition are drawn independently, where markings are
distributed according to $p^n$ and transitions according to
$p^n_T$. It may happen that the transition and the marking do not
``match'' and the transition cannot fire. Formally,
$r_n(m,t) = p^n_T(t)$, $r_n(m,\failop) = 0$ (where $m\in \mathcal{M}$,
$t\in T$). This extends to the case where
$\failop$ has non-zero probability, with probability
distribution $p^n_T\colon T_f\to [0,1]$.

\noindent\textsf{\textbf{Stochastic net case:}} We consider stochastic Petri nets \cite{m:stochastic-petri} which are
often provided with a semantics based on continuous-time Markov chains
\cite{t:introduction-markov-chains}. Here, however we do not consider
continuous time, but instead model the embedded discrete-time Markov
chain of jumps that abstracts from the timing. The firing rate of a
transition $t$ is proportional to $p^n_T(t)$.

Intuitively, we first sample a marking $m$ (according to $p^n$) and
then sample a transition, restricting to those that are enabled in
$m$.
Formally, for every $t\in T_f$, $r_n(m,t) = 0$, $r_n(m,\failop) = 1$
if no transition can fire in $m$ and
$r_n(m,t) = p^n_T(t)/\sum_{\text{$m\overset{t'}{\Rightarrow}$}}
p^n_T(t')$, $r_n(m,\failop) = 0$ otherwise.

\medskip

Other semantics might make sense, for instance the probability of
firing a transition could depend on a place not contained in its
pre-condition. Furthermore, it is possible to mix the two semantics
and do one step in the independent and the next in the stochastic
semantics.

\begin{figure}
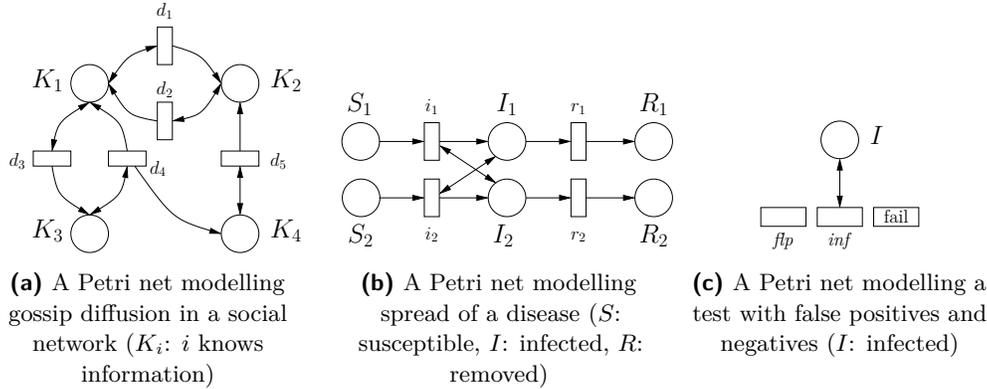

  \captionsetup[subfigure]{justification=centering}
  \begin{subfigure}[t]{0.3\textwidth}
    \centering \scalebox{0.5}{\input{net-gossip-x.tex}}
    \caption{A Petri net modelling gossip diffusion in a social
      network ($K_i$: $i$ knows information)}
    \label{fig:pn-gossip}
  \end{subfigure}
  \quad
  \begin{subfigure}[t]{0.3\textwidth}
    \centering
    \scalebox{0.5}{\input{net-sir-x.tex}}
    \caption{A Petri net modelling spread of a disease ($S$:
      susceptible, $I$: infected, $R$: removed)}
    \label{fig:pn-sir}
  \end{subfigure}
  \quad
  \begin{subfigure}[t]{0.28\textwidth}
    \centering
    \scalebox{0.5}{\input{net-test-x.tex}}
    \caption{A Petri net modelling a test with false positives and
      negatives ($I$: infected)}
    \label{fig:pn-test}
  \end{subfigure}
  \caption{Example Petri nets}
\end{figure}
\todo{\textbf{CONCUR Rev1:} Different examples are used in different 
definitions to exemplify the concepts (e.g. Example 1a and 2).
This makes it hard for the reader to follow, and a single example 
would help a lot.}

\begin{example}
  \label{ex:petri-nets}
  The following nets illustrate the two semantics. The first net
  (Fig.~\ref{fig:pn-gossip}) explains the diffusion of gossip in a
  social network: There are four users and each place $K_i$ represents
  the knowledge of user~$i$. To convey the fact that user $i$ knows
  some secret, place $K_i$ contains a token.  The diffusion of
  information is represented by transitions $d_j$.
  % A message can only be sent by user $i$ to a set
  % of other users if $i$ already possesses the information.
  E.g., if $1$ knows the secret he
  will tell it to either $2$ or $3$ and if $3$ knows a secret she will
  broadcast it to both $1$ and $4$. Note that a
  person will share the secret even if the recipient already knows,
  and she will retain this knowledge (see the double arrows in the
  net).\footnote{Hence, in the Petri net semantics, we allow a transition
    to fire although the post-conditions is marked.}
  
  Here we use the stochastic semantics: only transitions that are
  enabled will be chosen (unless the marking is empty and no
  transition can fire).
  We assume that $p_T(d_2) = \sfrac{1}{3}$ and
  $p_T(d_1) = p_T(d_3) = p_T(d_4) = p_T(d_5) = \sfrac{1}{6}$, i.e.,
  user~$2$ is more talkative than the others.

  One of the states of the Markov chain is the marking $m=1100$
  ($K_1,K_2$ are marked -- users~$1$ and~$2$ know the secret -- and
  $K_3,K_4$ are unmarked -- users~$3$ and~$4$ do not). 
  In this situation transitions   $d_1,d_2,d_3$ are enabled.
  We normalize the probabilities and obtain that
  $d_2$ fires with probability $\sfrac{1}{2}$ and the other two with
  probability $\sfrac{1}{4}$. By firing $d_1$ or $d_2$ we stay in
  state $1100$, i.e., the corresponding Markov chain has a loop with
  probability $\sfrac{3}{4}$. Firing $d_3$ gives us a transition to
  state $1110$ (user~$3$ now knows the secret too) with
  probability $\sfrac{1}{4}$.

  The second net (Fig.~\ref{fig:pn-sir}) models the classical SIR
	  infection model \cite{keeling2005networks} \todo{\textbf{FSTTCS Rev1:} You cannot be certain
    this model is well known. You should provide explanations and/or a
    reference. Ba: done} for two persons. A person is
  \emph{susceptible} (represented by a token in place $S_i$) if he or
  she has not yet been infected. If the other person is infected
  (i.e. place $I_1$ or $I_2$ is marked), then he or she might also get
  \emph{infected} with the disease. Finally, people recover (or die),
  which means that they are \emph{removed} (places $R_i$). Again we
  use the stochastic semantics.

  The third net (Fig.~\ref{fig:pn-test}) models a test (for instance
  for an infection) that may have false positives and false
  negatives. A token in place $I$ means that the corresponding person
  is infected. Apart from $I$ there is another random variable $R$
  (for result) \todo{\textbf{FSCTTCS Rev1:} as T already denotes the
    set of transitions, it is not a good choice to use T as a random
    variable name. Ba: done} that tells whether the test is positive or
  negative. In order to faithfully model the test, we assign the
  following probabilities to the transitions:
  $p_T(\mathit{flp}) = P(R\mid \bar{I})$ (false or lucky positive:
  this transition can fire regardless of whether $I$ is marked, in
  which case the test went wrong and is only accidentally positive),
  $p_T(\mathit{inf}) = P(R\mid I) - P(R\mid \bar{I})$ (the remaining
  probability,\footnote{Here we require that
    $P(R\mid \bar{I}) \le P(R\mid I)$.} such that the probabilities of
  $\mathit{flp}$ and $\mathit{inf}$ add up to the true positive) and
  $p_T(\failop) = P(\bar{R}\mid I)$ (false negative). Here we use the
  independent semantics, assuming that we have a random test where the
  ground truth (infected or not infected) is independent of the
  firing probabilities of the transitions.
\end{example}

\section{Uncertainty Reasoning for Condition/Event Nets}
\label{sec:uncertainty-reasoning}

We now introduce the following scenario for uncertainty reasoning:
assume that we are given an initial probability distribution $p^0_*$
on the markings of the Petri net. We stipulate that the fail state $*$
cannot occur, \todo{\textbf{FSTTCS Rev1:} This is another sentence
  that adds ambiguity to the meaning of "failure". Overall, you should
  improve the way semantics of the net and failure during simulation
  are presented. Ba: done} assuming that the state of the net is
always some (potentially unknown) well-defined marking. If this fail
state would be reached in the Markov model, we assume that the marking
of the Petri net does not change, i.e., we perform a ``reset'' to the
previous marking.

Furthermore, we are aware of all firing probabilities of the various
transitions, given by the functions $(r_n)_{n\in\nat_0}$ and hence all
transition matrices $P^n$ that specify the transition probabilities at
step~$n$.

\todo[inline]{\textbf{FSTTCS Rev1:} The next paragraph adds even more ambiguity : "we observe the system and obtain sequences of success and failure". 
If you observe the running system, you can only record success, as the
system plays only transitions it can fire. Now if "observing the
system " means testing a random sequence of transitions, then indeed,
this sequence can contain a transition that is not firable. A this
seems to be the setting you consider, you should make it clear. Ba: done}
Then we observe the system and obtain a sequence of \emph{success} and
\emph{failure} occurrences. We are not told which exact transition
fires, but only if the firing is successful or fails (since the
pre-condition of the transition is not covered by the
marking). Note that according to our model, transitions \emph{can} be
chosen to fire, although they are not activated. This could happen if
either a user or the environment tries to fire such a transition,
unaware of the status of its pre-condition.
\todo{\textbf{CONCUR Rev1:} In l. 178, it is stated that the Models in the right 
plot of Fig. 6 cannot be computed by MBN but this contradicts the left plot 
where it is shown that MBN computes several models for values smaller than 25.\\
\textbf{Reb:} I cannot make sense of this comment...}
Failure corresponds to entering state $*$ and in this case
we assume the marking does not change.
That is, we keep the previous marking, but acquire additional
knowledge -- namely that firing fails -- which is used to update the
probability distribution according to
Prop.~\ref{prop:interpretation-prob} (by performing the corresponding
matrix multiplications, including normalization).

We use the following notation: let $M$ be a matrix indexed over
$\mathcal{M}\cup \{*\}$. Then we denote by $M_*$ the matrix obtained
by deleting the $*$-indexed row and column from $M$. Analogously for a
vector $p$. Note that $(M\cdot p)_* = M_*\cdot p_*$.
Furthermore if $p_*$ is a sub-probability vector, indexed over
$\mathcal{M}$, $\norm(p_*)$ stands for the corresponding
normalized vector, where the $m$-entry is
$p_*(m)/\big(\sum_{m'\in\mathcal{M}} p_*(m')\big)$. 

\begin{restatable}{prop}{PropInterpretationProb}
  \label{prop:interpretation-prob}
  Let $r_n\colon \mathcal{M}\times T_f\to[0,1]$ and
  $p^n\colon \mathcal{M}\cup \{*\}\to [0,1]$ be given as above.
  % We consider the probability space
  % $(\mathcal{M}\times T_f,\mathcal{P}(\mathcal{M}\times T_f),\prob)$,
  % where $\prob(\{(m,t)\}) = p^n(m)\cdot r_n(m,t)$.
  Let $N$ be a C/E net and let $(X_n)_{n\in{\nat_0}}$ be the Markov
  chain generated from $N, r_n$. Then
  \begin{itemize}
  % \item
  %   $P(X_{n+1}=m') = p^{n+1}(m') = (P^n\cdot p^n)(m') =
  %   \prob(\{(m,t)\in \mathcal{M}\times T\mid m\overset{t}{\Rightarrow}
  %   m'\})$
  % \item
  %   $P(X_n\neq *\land X_{n+1}=*) = \prob(\{(m,t)\in
  %   \mathcal{M}\times T_f\mid m\not\overset{t}{\Rightarrow}\})$
  % \item
  %   $P(X_n = m \land X_{n+1}=*) = (F^n\cdot p^n)(m) = \prob(\{(m,t)
  %   \mid t\in T_f, m\not\overset{t}{\Rightarrow}\})$ where
  %   $F^n(\bar{m}\mid \bar{m}) = P(X_n = \bar{m} \mid X_{n+1} = *) =
  %   \sum_{t\in T_f, \bar{m}\not\overset{t}{\Rightarrow}}
  %   r_n(\bar{m},t)$ and $F^n(*\mid *) = 1$, all other entries are $0$
  %   (cf. Prop.~\ref{prop:markov-reach-q-prime}).
  \item
    $P(X_{n+1}=m' \mid X_{n+1}\neq *, X_n\neq *) = P(X_{n+1}=m' \mid
    X_{n+1}\neq *) = \norm(P^n_*\cdot p^n_*)(m')$
  \item
    $P(X_n=m \mid X_{n+1} = *, X_n\neq *) = \norm(F^n_*\cdot
    p^n_*)(m)$
  \end{itemize}
  where $p^n(m) = P(X_n=m)$, $p^n(*) = P(X_n=*)$ and $F^n$ is a
  diagonal matrix with
  $F^n(\bar{m}\mid \bar{m}) := P^n(*\mid \bar{m})$, $\bar{m}\in\mathcal{M}$, and
  $F^n(*\mid *) := P^n(*\mid *) = 1$, all other entries are $0$.
\end{restatable}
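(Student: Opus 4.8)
The plan is to unfold both conditional probabilities directly from the definition of conditional probability, to rewrite numerator and denominator in terms of the transition-matrix entries $P^n(q'\mid q)$, and to use the identity $(M\cdot p)_* = M_*\cdot p_*$ noted just before the statement together with the fact that the fail state $*$ is absorbing.

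For the first item I would first handle $P(X_{n+1}=m'\mid X_{n+1}\neq *, X_n\neq *)$ for a marking $m'\in\mathcal M$, writing it as $P(X_{n+1}=m', X_n\neq *)\,/\,P(X_{n+1}\neq *, X_n\neq *)$. Conditioning on the value of $X_n$ and summing over $m\in\mathcal M$ gives $P(X_{n+1}=m', X_n\neq *) = \sum_{m\in\mathcal M} P^n(m'\mid m)\,p^n(m)$, which is precisely $(P^n_*\cdot p^n_*)(m')$ since deleting the $*$-indexed row and column of $P^n$ just restricts the indices to $\mathcal M$. Summing this over $m'\in\mathcal M$ yields the denominator, so the quotient is $\norm(P^n_*\cdot p^n_*)(m')$; I would also remark that $P^n_*\cdot p^n_*$ is a genuine sub-probability vector, because each row $P^n(\cdot\mid m)$ loses the mass $P^n(*\mid m)$ when its $*$-entry is dropped, so that $\norm$ legitimately applies. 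To obtain the remaining equality, I would use absorption: $P(X_{n+1}=m'\mid X_n=*) = 0$ forces $P(X_{n+1}=m', X_n=*) = 0$, hence $P(X_{n+1}=m') = P(X_{n+1}=m', X_n\neq *)$ and, summing over $m'$, $P(X_{n+1}\neq *) = P(X_{n+1}\neq *, X_n\neq *)$; thus conditioning on $X_{n+1}\neq *$ alone produces the same ratio.

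For the second item, since $\{X_n=m\}\subseteq\{X_n\neq *\}$ for $m\in\mathcal M$, I would write the target as $P(X_n=m, X_{n+1}=*)\,/\,P(X_{n+1}=*, X_n\neq *)$. The numerator is $P^n(*\mid m)\,p^n(m) = F^n(m\mid m)\,p^n(m) = (F^n_*\cdot p^n_*)(m)$, using that $F^n$ and hence $F^n_*$ is diagonal; summing over $m\in\mathcal M$ produces the denominator, so the quotient is $\norm(F^n_*\cdot p^n_*)(m)$, as claimed.

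I do not expect a real obstacle: the proof is essentially careful bookkeeping about which rows and columns are being deleted, handled uniformly by $(M\cdot p)_* = M_*\cdot p_*$, plus the standing assumption (needed for the conditional probabilities to be defined) that the conditioning events have positive probability, and the elementary remark about absorption of $*$ that justifies adding or dropping the hypothesis $X_n\neq *$. The only mild subtlety worth stating explicitly is why $P^n_*\cdot p^n_*$ and $F^n_*\cdot p^n_*$ are sub-probability vectors, so that $\norm$ is applicable.
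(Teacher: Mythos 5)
Your proposal is correct and follows essentially the same route as the paper's proof: unfold the conditional probabilities, use the absorbing nature of $*$ to identify the two conditionings in the first item, and recognize the resulting sums as $(P^n_*\cdot p^n_*)(m')$ and $(F^n_*\cdot p^n_*)(m)$ before normalizing. The only cosmetic difference is that you derive the matrix products by directly summing over $X_n=m$, whereas the paper routes the first item through $p^{n+1}=P^n\cdot p^n$ and the identity $(M\cdot p)_*=M_*\cdot p_*$; the content is the same.
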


% \todo{\textbf{Ba:} explain that we do some kind of ``reset'' in the
%   case of fail.}

Hence, in case we observe a success we update the probability
distribution to $\bar{p}_{n+1}$ by computing $P^n_*\cdot \bar{p}^n$
(and normalizing). Instead, in the case of a failure we assume that
the marking stays unchanged, but by observing the failure we have
gathered additional knowledge, which means that we can replace
$\bar{p}_{n+1}$ by $F^n_*\cdot \bar{p}_n$ (after normalization).

% , where the diagonal
% matrix $F^n$ is defined in Prop.~\ref{prop:markov-reach-q-prime}.

$P^n_*$ and $F^n_*$ are typically not stochastic, but only
sub-stochastic.  For a (sub-)probability matrix $M_*$ and a
(sub-)probability vector $p_*$ it is easy to see that
$\norm(M_*\cdot p_*) = \norm(M_*\cdot \norm(p_*))$.
Hence another option is to omit the normalization steps and to
normalize at the very end of the sequence of
observations. Normalization may be undefined (in the case of the
$0$-vector), which signifies that we assumed an a priori probability
distribution that is inconsistent with reality.

% \todo[inline]{Ba: 
%   Maybe we should define this more formally, using a sequence
%   $(Y_n)_n$ of random variables on $\mathcal{M}$. }

\begin{example}
  \label{ex:uncertainty-reasoning}
  We get back to Ex.~\ref{ex:petri-nets} and discuss uncertainty
  reasoning. Assume that in the net in Fig.~\ref{fig:pn-sir}
  person~$1$ is susceptible ($S_1$ is marked), person~$2$ is infected
  ($I_2$ is marked) and the $i_j$-transitions have a higher rate
  (higher probability of firing) than the $r_j$-transitions. Then, in
  the next step the probability that both are infected is higher than
  the probability that $1$ is still susceptible and $2$ has recovered.

  Regarding the net in Fig.~\ref{fig:pn-test} we can show that in the
  next step, in the case of success, the probability distribution is
  updated in such a way that place~$I$ is marked with probability
  $P(I\mid R)$ and unmarked with probability $P(\bar{I}\mid R)$
  ($P(I\mid \bar{R})$, $P(\bar{I}\mid \bar{R})$ in the case of
  failure), exactly as required. For more details see
  \full{Appendix~\ref{sec:test-fpfn}}\short{\cite{bchk:uncertainty-reasoning-arxiv}}.
  \todo{\textbf{CONCUR Rev2:} This last paragraph is not very clear.}
\end{example}
 
\section{Modular Bayesian Networks}
\label{sec:bn-props}

\todo[inline]{\textbf{FSTTCS Rev1:} I am not a specialist of Bayesian networks, but I found the explanations of sections 4 to 6, and in particular the explanations for MBNs, Factors, and algo 11 rather technical. 
I think they need  more explanations. A part of the material in these sections is reused from a CONCUR paper in 2018 by the same authors ( [4] ) . I found the presentation in the 
CONCUR paper more accessible. As far as MBN are concerned, the difference between this paper and [4] is not really emphasized. The results in [4] show an improvement of MBN on 
the average runtime when performing 100 update operations, while this paper shows an improvement of MBNs use when addressing Petri net questions that require simulation. 
This paper explicitly eliminates variables in Bayesian networks while the technique to update in [4] merges nodes (which seems to be the same).}

\begin{wrapfigure}{r}{0.3\textwidth}
  \centering
  \scalebox{0.4}{\input{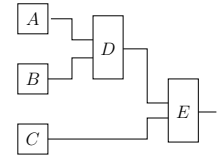}}
  \caption{An example Bayesian network}
  \label{fig:bn-variable-elimination}
\end{wrapfigure}
In order to implement the updates to the probability distributions
described above in an efficient way, we will now represent probability
distributions over markings symbolically as Bayesian networks
\cite{p:bayesian-networks,charniak1991bayesian}.  Bayesian networks
(BNs) model certain probabilistic dependencies of random variables
through conditional probability tables and a graphical representation.

Consider for instance the Bayesian network in
Fig.~\ref{fig:bn-variable-elimination}. Each node ($A$, $B$, $C$, $D$,
$E$) represents a binary random variable, where a node without
predecessors (e.g., $A$) is associated with the probabilities $P(A)$
and $P(\bar{A})$. Edges denote dependencies: for instance $D$ is
dependent on $A,B$, which means that $D$ is associated with a
conditional probability table (matrix) with entries $P(D\mid A,B)$,
similar for $E$ (entries of the form $P(E\mid D,C)$). In both cases,
the matrix contains $2\cdot 4 = 8$ entries.

We will later describe how to derive probability distributions and
marginal probabilities (for instance $P(E)$) from a Bayesian network.

We deviate from the literature on Bayesian networks in three respects:
first, since we will update and transform those networks, we need a
structure where we can easily express compositionality via sequential
and parallel composition. To this end we use the representation of
Bayesian networks via PROPs as in
\cite{f:causal-theories,jz:influence-bayesian}.  Second, we permit
sub-stochastic matrices. Third, we allow a node to have several
outgoing wires, \todo{\textbf{FSTTCS Rev1:} what is the interest of
  having several outgoing wires? Ba: done} whereas in classical
Bayesian networks a node is always associated to the distribution of a
single random variable. This is needed since we need to add nodes to a
network that represent stochastic matrices of arbitrary dimensions
(basically the matrices $P^n$ and $F^n$ of
Proposition~\ref{prop:interpretation-prob}). %\todo{Ben: I changed this sentence, old one in comment below}
% Second, we allow that a node ``produces'' more than one probability value
% (dependent on its predessors).
%
We rely on the notation introduced in
\cite{chhk:update-ce-nets-bayesian}, but extend it by taking the
last item above into account.

%\medskip

\heading{Causality Graphs}

The syntax of Bayesian networks is provided by \emph{causality graphs}
\cite{chhk:update-ce-nets-bayesian}.
For this we fix a set of node labels $G$, also called
\emph{generators}, where every $g\in G$ is associated with a type
$n_g\to m_g$, where $n_g,m_g\in\nat_0$.

\begin{defi}[Causality Graph (CG)]
  \label{def:CG}
  A \emph{causality graph (CG)} of type $n\to m$, $n,m\in\nat_0$, is a
  tuple $B = (V,\ell,s,\out)$ where
  \begin{itemize}
  \item $V$ is a set of \emph{nodes}
  \item $\ell\colon V\to G$ is a \emph{labelling function} that
    assigns a generator $\ell(v)\in G$ to each node \(v \in V\).
  \item $s\colon V\to W_B^*$ is the \emph{source function} that maps a
    node to a sequence of input wires, where $|s(v)| = n_{\ell(v)}$
    and
    $W_B = \{(v,p)\mid v \in V, p \in
    \{1,\dots,m_{\ell(v)}\}\}\cup\{i_1,\dots,i_n\}$ is the \emph{wire
      set}.
  \item $\out\colon \{o_1,\dots,o_m\}\to W_B$ is the \emph{output
      function} that assigns each output port to a wire.
  \end{itemize}
  Moreover, the corresponding directed graph (defined by $s$) has to
  be acyclic.
  
  We also define the \emph{target function} $t\colon V\to W_B^*$ with
  $t(v) = (v,1)\dots (v,m_{\ell(v)})$ and the set of \emph{internal wires}
  $\mathit{IW}_B = W_B\backslash
  \{i_1,\dots,i_n,\out(o_1),\dots,\out(o_m)\}$.
\end{defi}

We visualize such causality graphs by drawing the $n$ input wires on
the left and the $m$ outputs on the right. Each node $v$ is drawn as a
box, with $n_v$ ingoing wires and $m_v$ outgoing wires, ordered from
top to bottom. Connections induced by the source and by the output
function are drawn as undirected edges (see
Fig.~\ref{fig:bn-variable-elimination}). 

We define two operations on causality graphs: sequential composition
and tensor. Given $B$ of type $n\to k$ and $B'$ of type $k\to m$, the
sequential composition is obtained via concatenation, by identifying
the output wires of $B$ with the input wires of $B'$, resulting in
$B;B'$ of type $n\to m$. The tensor takes two causality graphs $B_i$
of type $n_i\to m_i$, $i\in\{1,2\}$ and takes their disjoint union,
concatenating the sequences of input and output wires, resulting in
$B_1\otimes B_2$ of type $n_1+n_2\to m_1+m_2$. For \full{a
  visualization see Fig.~\ref{fig:Kronecker-Multiplication} and for
}formal definitions see
\cite{chhk:update-ce-nets-bayesian,c:complex-networks-ppp}.

%\medskip

\heading{(Sub-)Stochastic Matrices}

The semantics of modular Bayesian networks is given by
\emph{(sub-)stochastic matrices}, i.e., matrices with entries from
$[0,1]$, where column sums will be at most $1$.  If the sum equals
exactly 1 we obtain stochastic matrices.

We consider only matrices whose dimensions are a power of two.
Analogously to causality graphs, we type matrices, and say that a
matrix has type $n\to m$ whenever it is of dimension $2^m \times 2^n$.
We again use a sequential composition operator $;$ that corresponds to
\emph{matrix multiplication} ($P;Q = Q\cdot P$) and the
\emph{Kronecker product} $\otimes$ as the tensor. More concretely,
given $P\colon n_1\to m_1$, $Q\colon n_2\to m_2$ we define
$P\otimes Q\colon n_1+n_2\to m_1+m_2$ as
$(P\otimes Q)(\bitvec{x}_1\bitvec{x}_2\mid \bitvec{y}_1\bitvec{y}_2) =
P(\bitvec{x}_1\mid \bitvec{y}_1)\cdot Q(\bitvec{x}_2\mid
\bitvec{y}_2)$ where $\bitvec{x}_i \in \{0,1\}^{m_i}$,
$\bitvec{y}_i \in \{0,1\}^{n_i}$.

%\medskip

\heading{Modular Bayesian Networks}

Finally, \emph{modular Bayesian networks}, adapted from
\cite{chhk:update-ce-nets-bayesian}, are causality graphs, where each
generator $g\in G$ is associated with a (sub-)stochastic matrix of
suitable type.

\begin{defi}[Modular Bayesian network (MBN)]
  \label{def:MBN}
  An MBN is a tuple $(B,\ev)$ where $B$ is a causality graph and $\ev$
  an \emph{evaluation function} that assigns to every generator
  $g\in G$ of type $n\to m$ a $2^m\times 2^n$ sub-stochastic matrix
  $\ev(g)$.  An MBN $(B,\ev)$ is called an \emph{ordinary Bayesian
    network (OBN)} whenever $B$ has no inputs (i.e. it has type
  $0 \to m$), each generator is of type $n\to 1$, $\out$ is a
  bijection and every node is associated with a stochastic matrix.
\end{defi}

% In an OBN every node $V$ corresponds to a collection of random
% variables and it represents a probability distribution on
% $\{0,1\}^m$.

% Note that OBNs are exactly the Bayesian networks considered in
% \cite{fgg:bayesian-network-classifiers}. %We will in the following
%sometimes omit the second component $\ev$ if it is not relevant.

We now describe how to evaluate an MBN to obtain a (sub-)stochastic
matrix. For OBNs -- which are exactly the Bayesian networks considered
in \cite{fgg:bayesian-network-classifiers} -- this coincides with the
standard interpretation and yields a probability vector of dimension
$m$.

\begin{defi}[MBN evaluation]
  \label{def:eval}
  Let $(B,\ev)$ be an MBN where $B$ is of type $n\to m$.%, let $\ev$ be
  % an evaluation function.
  Then $M_\ev(B)$ is a $2^m\times 2^n$-matrix, which is
  defined as follows:
  \[
    M_\ev(B)(\x_1\dots\x_m\mid \y_1\dots\y_n) = \sum_{b\in
      \mathcal{B}} \prod_{v \in V} \ev(l(v))\,(b(t(v))\mid b(s(v)))
  \]
  with $\x_1,\dots,\x_m,\y_1,\dots,\y_n\in\{0,1\}$.  $\mathcal{B}$ is
  the set of all functions $b: W_B \rightarrow \{0,1\}$ such that
  $b(i_j)=\y_j$, $b(out(o_k))=\x_k$, where $k\in\{1,\dots,m\}$,
  $j\in \{1,\dots,n\}$. The functions $b$ are applied pointwise to
  sequences of wires.
\end{defi}

Calculating the underlying probability distribution of an MBN can also
be done on a graphical level by treating every occurring wire as a
boolean variable that can be assigned either $0$ or $1$. Function
$b\in\mathcal{B}$ assigns the wires, ensuring consistency with the
input/output values. After the wire assignment, the corresponding
entries of each matrix $\ev(l(v))$ are multiplied. After iterating
over every possible wire assignment, the products are summed up.

% The case differentiation when encountering diagonal matrices leads to
% a reduction of complexity, by breaking off the computation for a whole
% wire assignment at certain points. If the examined matrix only
% possesses non-zero entries on the diagonal and the input and output
% wires are assigned differently, the corresponding entry returns 0,
% irrefutably making the product zero.

% Based on our knowledge of the MBN's structure and positions of
% diagonal matrices in the network, it can easily be figured out,
% whether a certain assignment can be skipped beforehand, by locating
% diagonal matrices and checking the boolean assignments of the wires to
% detect whether an entry apart from the diagonal would be taken into
% the product.

Note that $M_\ev$ is compositional, it preserves sequential
composition and tensor. More formally, it is a functor between
symmetric monoidal categories, or -- more specifically -- between
CC-structured PROPs. \full{(For more details on PROPs see
  Appendix~\ref{sec:props}.)}\short{(More details on PROPs are given
  in the full version \cite{bchk:uncertainty-reasoning-arxiv}.)}

\begin{example}
  \label{ex:evaluate-bn}
  We illustrate Def.~\ref{def:eval} by evaluating the Bayesian network
  $(B',\ev)$ in Fig.~\ref{fig:bn-variable-elimination}.  This results
  in a $2\times 1$-matrix $M_\ev(B')$, assigning (sub-)probabilities
  to the only output wire in the diagram being $1$ or $0$,
  respectively.
  More concretely, we assign values to the four inner wires to obtain:
  \[ M_\ev(B')(\e) = \sum_{\a\in\{0,1\}} \sum_{\b\in\{0,1\}}
    \sum_{\c\in\{0,1\}} \sum_{\d\in\{0,1\}} \big(A(\a)\cdot B(\b)
    \cdot C(\c) \cdot D(\d\mid \a\b) \cdot E(\e\mid \c\d)\big), \]
  where $\a,\b,\c,\d,\e$ correspond to the output wire of the
  corresponding matrix ($A,B,C,D,E$).
\end{example}

% \todo[inline]{Actually, an example for a (modular) Bayesian network
%   would be quite useful, but we do not have much space. Maybe we can
%   refer to Ex.~\ref{ex:update-bn}.}

\section{Updating Bayesian Networks}
\label{sec:updating-bn}

%\todo[inline]{Use $P^n_*$ and $F^n_*$}

An MBN $B$ of type $0\to k$, as defined above, symbolically represents
a probability distribution on $\{0,1\}^k$, that is, a probability
distribution on markings of a net with $|S|=k$ places.

Under uncertainty reasoning
(cf. Section~\ref{sec:uncertainty-reasoning}), the probability
distribution in the next step $p^{n+1}$ is obtained by multiplying
$p^n$ with a matrix $M$ (either $P^n_*$ in the successful case or
$F^n_*$ in the case of failure).  \todo{\textbf{CONCUR Rev1:} The impact of a
  fail of a transition on the analysis is still unclear to me.}
Hence, a simple way to update $B$ would be to create an MBN $B_M$ with
a single node $v$ (labelled by a generator $g$ with $\ev(g) = M$),
connected to $k$ inputs and $k$ outputs.
Then the updated $B'$ is simply $B;B_M$ (remember that sequential
composition corresponds to matrix multiplication). However, at
dimension $2^k\times 2^k$ the matrix $M$ is huge and we would
sacrifice the desirable compact symbolic representation. Hence the aim
is to decompose $M = M'\otimes \Id$ where $\Id$ is an identity matrix
of suitable dimension. Due to the functoriality of MBN evaluation this
means composing with a smaller matrix and a number of identity wires
(see e.g. Fig.~\ref{fig:bn-gossip-2}).

\todo[inline]{\textbf{FSTTCS Rev1:} IMHO this decomposition is not straightforward, so these two paragraphs should explain better how knowing the flow relations among places and transitions can help 
you deriving an appropriate decomposition. My feeling is that in most nets, all places and transitions might be relevant according to distribution $p^n$ after a certain 
number of steps. If this is the case, decomposition-based techniques perform well only for a limited number of steps. \\
\textbf{FSTTCS Rev1:} First the connection between transitions 
and updates of the MBN are described rapidly on page 7. A reason to 
use MBNs is of course their modularity, and one can expect the concurrent 
structure of an MBN to facilitate a modular decomposition. However, 
this characteristics is addressed in only one sentence 
%"This decomposition arises naturally from the structure of the Petri net".
IMHO, this deserves more explanations. Ba: done}

This decomposition arises naturally from the structure of the Petri
net $N$, in particular if there are only relatively few transitions
that may fire in a step. In this case we intuitively have to attach a
stochastic matrix only to the wires representing the places connected
to those transitions, while the other wires can be left unchanged. If
there are several updates, we of course have to attach several
matrices, but each of them might be of a relatively modest size.

In order to have a uniform treatment of the various semantics, we
assume that for each step~$n$ there is a set $\bar{S}\subseteq S$ of
places\footnote{Without loss of generality we assume that the outputs
  have been permuted such that places in $\bar{S}$ occur first in the
  sequence of places.} and a set $\bar{T}\subseteq T_f$ of transitions
such that: (i) $r_n(m,t) = 0$ whenever $t\not\in\bar{T}$; (ii)
$r_n(m_1m_2,t) = \bar{r}(m_1,t)$ for some function $\bar{r}$ (where
$m_1$ is a marking of length $\ell = |\bar{S}|$, corresponding to the
places of $\bar{S}$); (iii) $\bar{S}$ contains at least
$\leftdot{t},\rightdot{t}$ for all $t\in\bar{T}$.  Intuitively,
$\bar{S}$, $\bar{T}$ specify the relevant places and transitions.

For the two Petri net semantics studied earlier, these conditions are
satisfied if we take as $\bar{T}$ the support of $p^n_T$ and as
$\bar{S}$ the union of all pre- and post-sets of $\bar{T}$. The
function $r_n$ can in both cases be defined in terms of $\bar{r}$: in
the independent case this is obvious, whereas in the stochastic net
case we observe that $r_n(m,t)$ is only dependent on $p^n_T$ and on the
set of transitions that is enabled in $m$ and this can be derived from
$m_1$.

Now, under these assumptions, we can prove that we obtain the
decomposition mentioned above.

%\todo[inline]{Ba: What about the case where $t = \failop$?}

\begin{restatable}{prop}{PropPFMatrices}
  \label{prop:p-f-matrices}
  Assume that $N$ is a condition/even-net together with a function
  $r_n$. Assume that we have $\bar{S}\subseteq S$,
  $\bar{T}\subseteq T_f$ satisfying the conditions above. Then
  \begin{itemize}
  \item $P^n_* = P'\otimes \Id_{2^{k-\ell}}$ where
    $P'(m'_1\mid m_1) = \sum_{t\in\bar{T},m_1\overset{t}{\Rightarrow} m'_1}
    \bar{r}(m_1,t)$.
  \item $F^n_* = F'\otimes \Id_{2^{k-\ell}}$ where
    $F'(m'_1\mid m_1) =
    \sum_{t\in \bar{T},m_1\not\overset{t}{\Rightarrow}}
    \bar{r}(m_1,t)$ if $m_1=m'_1$ and $0$ otherwise.
  \end{itemize}
  Here $P',F'$ are $2^\ell\times 2^\ell$-matrices and $m_1,m'_1\subseteq\bar{S}$. Note also that we
  implicitly restricted the firing relation to the markings on
  $\bar{S}$.
\end{restatable}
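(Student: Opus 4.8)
The plan is a direct entry-by-entry comparison of the two sides, exploiting that every transition in $\bar{T}$ touches only the places in $\bar{S}$. By the standing assumption that the $\bar{S}$-places come first, each marking $m\in\mathcal{M}$ factors uniquely as $m=m_1m_2$ with $m_1\in\{0,1\}^\ell$ the $\bar{S}$-part and $m_2\in\{0,1\}^{k-\ell}$ the part on $S\setminus\bar{S}$; and by the definition of the Kronecker product on matrices, $(P'\otimes\Id_{2^{k-\ell}})(m'_1m'_2\mid m_1m_2)=P'(m'_1\mid m_1)\cdot\Id_{2^{k-\ell}}(m'_2\mid m_2)$, and similarly for $F'$. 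So the claim reduces to showing that $P^n_*$ and $F^n_*$ have exactly this block form.

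The key structural observation, and the place where condition~(iii) is essential, is the following. For $t\in\bar{T}\cap T$ we have $\leftdot{t},\rightdot{t}\subseteq\bar{S}$, hence (a) $m_1m_2\overset{t}{\Rightarrow}$ iff $m_1\overset{t}{\Rightarrow}$, since the enabledness condition $\leftdot{t}\subseteq m$ only constrains the $\bar{S}$-block; and (b) when $t$ fires, $m'=(m\setminus\leftdot{t})\cup\rightdot{t}$ leaves the $S\setminus\bar{S}$-block untouched, so $m_1m_2\overset{t}{\Rightarrow}m'_1m'_2$ holds exactly when $m_1\overset{t}{\Rightarrow}m'_1$ in the net restricted to $\bar{S}$ and $m_2=m'_2$. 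For $t=\failop$ both the ambient and the restricted relations are empty by the convention $m\not\overset{\failop}{\Rightarrow}$, so (a) holds there vacuously.

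For $P^n_*$ I would then unfold $P^n_*(m'_1m'_2\mid m_1m_2)=\sum_{t\in T,\,m_1m_2\overset{t}{\Rightarrow}m'_1m'_2}r_n(m_1m_2,t)$: condition~(i) restricts the sum to $t\in\bar{T}$; observation~(b) then makes the sum empty unless $m_2=m'_2$ and, in that case, rewrites the index set as $\{t\in\bar{T}\mid m_1\overset{t}{\Rightarrow}m'_1\}$; finally condition~(ii) replaces $r_n(m_1m_2,t)$ by $\bar{r}(m_1,t)$. The result is $P'(m'_1\mid m_1)\cdot\Id_{2^{k-\ell}}(m'_2\mid m_2)$, which is exactly the desired Kronecker factorization. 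For $F^n_*$ the argument is parallel: it is diagonal by definition, with $F^n_*(m_1m_2\mid m_1m_2)=P^n(*\mid m_1m_2)=\sum_{t\in T_f,\,m_1m_2\not\overset{t}{\Rightarrow}}r_n(m_1m_2,t)$; condition~(i) restricts to $t\in\bar{T}$, observation~(a) replaces $m_1m_2\not\overset{t}{\Rightarrow}$ by $m_1\not\overset{t}{\Rightarrow}$, and condition~(ii) yields $\sum_{t\in\bar{T},\,m_1\not\overset{t}{\Rightarrow}}\bar{r}(m_1,t)=F'(m_1\mid m_1)$, which is precisely the diagonal of $F'\otimes\Id_{2^{k-\ell}}$.

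I do not expect a genuine obstacle — the proof is essentially careful bookkeeping — but the step requiring the most attention is making sure the restricted firing relation on $\bar{S}$ used in the definitions of $P'$ and $F'$ really coincides with the ambient one for the relevant transitions, and that the $\failop$ case fits the same pattern; condition~(iii) is exactly what guarantees this, and it is indispensable, since a transition in $\bar{T}$ that moved a token outside $\bar{S}$ would destroy the block-diagonal structure and hence the $\otimes$-decomposition.
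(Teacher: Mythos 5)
Your proposal is correct and follows essentially the same route as the paper's own proof: an entry-by-entry computation of $P^n_*$ and $F^n_*$, using condition~(i) to restrict the sums to $\bar{T}$, condition~(ii) to replace $r_n(m_1m_2,t)$ by $\bar{r}(m_1,t)$, and condition~(iii) to see that firing of $t\in\bar{T}$ is decided by and acts only on the $\bar{S}$-block (i.e.\ $m\overset{t}{\Rightarrow}m'$ iff $m_1\overset{t}{\Rightarrow}m'_1$ and $m_2=m'_2$), with the failure case handled via diagonality exactly as in the paper's two-case argument. The only cosmetic difference is that you treat the off-diagonal entries of $F^n_*$ by appealing to diagonality of both sides rather than by the paper's explicit check that $m\neq m'$ forces $m_1\neq m'_1$ or $m_2\neq m'_2$, which is an equivalent observation.
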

\todo{\textbf{FSTTCS Rev1:} Why is this proposition interesting ? Can
  we save simulation time via computation of a tensor product on
  smaller matrices ( and what is the gain wrt MBN approach ?) Ba: done}

\begin{example}
  \label{ex:update-bn}
  \todo{\textbf{CONCUR Rev2:} It is good that Section 3 contains an explanation of 
  the setting of uncertainty reasoning for Petri nets. However, when getting 
  back to this (after a lot of other definitions) in examples, a reminder or 
  reference would be helpful (as in Example 4, where the initial assumption 
  seemed to be new).}
  In order to illustrate this, we go back to gossip diffusion
  (Fig.~\ref{fig:pn-gossip}, Ex.~\ref{ex:petri-nets}).
  Our input is the following: an initial probability distribution,
  describing the a priori knowledge, given by an MBN. Here we have no
  information about who knows or does not know the secret and hence we
  assume a uniform probability distribution over all markings.  This
  is represented by the Bayesian network in Fig.~\ref{fig:bn-gossip-1}
  where each node is associated with a $2\times 1$-matrix (vector)
  $K_i$ where both entries are $\sfrac{1}{2}$.

  Also part of the input is the family of transition distributions
  $(r_n)_{n\in\nat_0}$. Here we assume that the firing probabilities
  of transitions are as in Example~\ref{ex:petri-nets}, but not all
  users are active at the same time. We have information that in the
  first step only users~$1$ and~$2$ are active,\todo{\textbf{CONCUR Rev2:} Is
    the assumption of knowing which users are active (= an initial
    marking) generally given? Or is this just an example here, to
    compute the probabilities for the transitions and the two
    corresponding matrix entries? I.e., do you need to consider all
    possible initial markings (and hence the whole matrix) to compute
    the resulting probability for K\_3?  It’s not very clear, how far
    the assumption goes.}  hence by normalization we obtain
  probabilities $\sfrac{1}{4}$, $\sfrac{1}{2}$, $\sfrac{1}{4}$ for
  transitions $d_1$, $d_2$, $d_3$ (the other transitions are
  deactivated).

  Now we observe a success step. According to
  Sct.~\ref{sec:uncertainty-reasoning} we can make an update with
  $P_*$ where $P$ is the transition matrix of the Markov chain. Since
  none of the transitions is attached to place $K_4$ the optimizations
  of this section allow us to represent $P_*$ as $P'\otimes \Id_2$
  where $P'$ is an $8\times 8$-matrix.  E.g., as discussed in
  Ex.~\ref{ex:petri-nets}, we have $P'(110\mid 110) = \sfrac{3}{4}$,
  $P'(111\mid 110) = \sfrac{1}{4}$.  This matrix is simply attached to
  the modular Bayesian network (see Fig.~\ref{fig:bn-gossip-2}).

  Now assume that it is our task to compute the probability that place
  $K_3$ is marked. For this, we compute the corresponding marginal
  probabilities by terminating each output wire (apart from the third
  one) (see Fig.~\ref{fig:bn-gossip-3}). ``Terminating a wire'' means
  to remove it from the output wires. This results in summing up over
  all possible values assigned to each wire, where we can completely
  omit the last component, which is the unit of the Kronecker product.
  Note that the resulting vector is sub-stochastic and still has to be
  normalized. The normalization factor can be obtained by terminating
  also the remaining third wire,\todo{\textbf{FSTTCS Rev1:} what does
    "closing a wire" mean?  Ba: done, Be: ist closing=terminating?}
  which gives us the probability mass of the sub-probability
  distribution. Our implementation will now tell us that place $K_3$
  is marked with probability $\sfrac{5}{8}$.\todo{\textbf{Ba:} do the
    calculation in the appendix?}\todo{\textbf{CONCUR Rev2:} (last
    paragraph; "Now assume..")  It is not very clear how this
    works. It would be helpful to refer to the relevant equations and
    to actually state the computation.}
\end{example}
\todo[inline]{\textbf{CONCUR Rev2:} it is hard to distinguish the assumptions 
which need to be taken for the whole approach to work from the choice 
of values to compute exemplary values. Also, the description of the 
computation (of the probability for K\_3 to be marked) is very vague 
and hard to follow.}

\begin{figure}
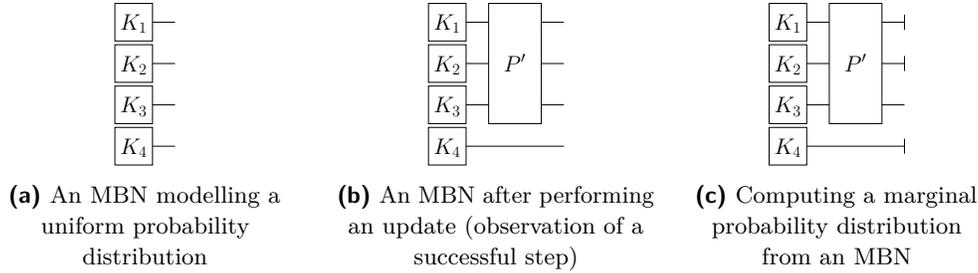

  \captionsetup[subfigure]{justification=centering}
  \begin{subfigure}[t]{0.3\textwidth}
    \centering \scalebox{0.5}{\input{bn-gossip-1-x.tex}}
    \caption{An MBN modelling a uniform probability distribution}
    \label{fig:bn-gossip-1}
  \end{subfigure}
  \quad
  \begin{subfigure}[t]{0.3\textwidth}
    \centering
    \scalebox{0.5}{\input{bn-gossip-2-x.tex}}
    \caption{An MBN after performing an update (observation of a
      successful step)}
    \label{fig:bn-gossip-2}
  \end{subfigure}
  \quad
  \begin{subfigure}[t]{0.28\textwidth}
    \centering
    \scalebox{0.5}{\input{bn-gossip-3-x.tex}}
    \caption{Computing a marginal probability distribution from an
      MBN}
    \label{fig:bn-gossip-3}
  \end{subfigure}
  \caption{Example: transformation of modular Bayesian networks}
\end{figure}

\section{Variable Elimination and Tree Decompositions}
\label{sec:variable-elimination}

 \heading{Motivation}
% \label{sec:motivation-ve}

 Given a modular Bayesian network, it is inefficient to obtain the
 full distribution, not just from the point of view of the
 computation, but also since its direct representation is of
 exponential size. \todo{\textbf{FSTTCS Rev1:} computing the full
   distribution is time consuming and often not needed? Ba:
   done}However what we often need is to compute a marginal
 distribution (e.g., the probability that a certain place is marked)
 or a normalization factor for a sub-stochastic probability
 distribution (cf. Ex.~\ref{ex:update-bn}). Another application would
 be to transform an MBN into an OBN, by isolating that part of the
 network that does not conform to the properties of an OBN, evaluating
 it and replacing it by an equivalent OBN.

Def.~\ref{def:eval} gives a recipe for the evaluation, which is
however quite inefficient. Hence we will now explain and adapt the
well-known concept of variable elimination 
\cite{d:bucket-elimination,d:modeling-reasoning-bn}.
Let us study the problem with a concrete example. Consider the
Bayesian network $B'$ in Fig.~\ref{fig:bn-variable-elimination} and
its evaluation described in Ex.~\ref{ex:evaluate-bn}.  If we perform
this computation one has to enumerate $2^4 = 16$ bit vectors of length
$4$. Furthermore, after eliminating $\d$ we have to represent a matrix
(also called \emph{factor} in the literature on Bayesian networks)
that is dependent on four random variables ($\a,\b,\c,\e$), hence we
say that it has width~$4$ ($2^4 = 16$ entries).

However, it is not difficult to see that we can -- via the
distributive law -- reorder the products and sums to obtain a more
efficient way of computing the values:
\[ M_\ev(B')(\e) = \sum_{\d\in\{0,1\}} \big(\sum_{\c\in\{0,1\}}
  \big(\sum_{\b\in\{0,1\}} \big(\sum_{\a\in\{0,1\}} \big(A(\a) \cdot
  D(\d\mid \a\b)\big)\cdot B(\b)\big) \cdot C(\c)\big) \cdot E(\e\mid
  \c\d)\big). \] In this way we obtain smaller matrices,
the largest matrix (or factor) that occurs is $D$ (width~$3$).
Choosing a different elimination order might have been worse. For
instance, if we had eliminated $\d$ first, we would have to deal with a
matrix dependent on $\a,\b,\c,\e$ (width~$4$).

\heading{Variable elimination}

The literature of Bayesian networks
\cite{d:bucket-elimination,d:modeling-reasoning-bn} extensively
studies the best variable elimination order and discusses the
relation to treewidth. For our setting we have to extend the results in the
literature, since we also allow generators with more than one output.
%different from what is done for Bayesian networks.

% Hence we will now define the notion of an elimination order and its
% width.

\begin{defi}[Elimination order]
  \label{def:elimination-order}
  Let $B = (V,\ell,s,\out)$ be the causality graph of a modular
  Bayesian network of type $n\to m$. As in Def.~\ref{def:CG} let $W_B$
  be the set of wires.  \todo{\textbf{CONCUR Rev2:} Probably the other
    reoccurring variables also refer to Definition 5?}

  We define an undirected graph $U_0$ that has as vertices\footnote{We
    talk about the \emph{nodes} of an MBN $B$ and the
    \emph{vertices} of an undirected graph $U_i$.} the wires $W_B$ and
  two wires $w_1,w_2$ are connected by an edge whenever they are
  connected to the same node.  More precisely, they are connected
  whenever they are input or output wires for the same node
  (i.e. $w_1,w_2$ are both in $s(v)t(v)$ for a node $v\in V$).

  Now let $w_1,\dots,w_k$ (where $k = |\mathit{IW}_B|$) be an ordering of
  the internal wires, a so-called \emph{elimination ordering}. We
  update the graph $U_{i-1}$ to $U_i$ by removing the next wire $w_i$
  and connecting all of its neighbours by edges (so-called \emph{fill
    in}). External wires are never eliminated.
  The \emph{width} of the elimination ordering is the size of the
  largest clique
  \todo{\textbf{CONCUR Rev2:} What is a clique / how is it defined? Is it a 
  strongly connected component?}
  that occurs in some graph $U_i$. The
  \emph{elimination width} of $B$ is the least width taken over all
  orderings. 
\end{defi}

% \todo{\textbf{Ba:} there are two different types of nodes around. Call
%   them ``nodes'' and ``vertices''?}

In the case of Bayesian networks, the set of wires of an OBN
corresponds to the set of random variables. In the literature, the
graph $U_0$ is called the moralisation of the Bayesian network, it is
obtained by taking the Bayesian network (an acyclic graph), forgetting
about the direction of the edges, and connecting all the parents
(i.e., the predecessors) of a random variable, i.e. making them form a
clique. This results in the same graph as the construction described
above.

To introduce the algorithm, we need the notion of a \emph{factor},
already hinted at earlier.

\begin{defi}[Factor]
  Let $(B,\ev)$ be a modular Bayesian network with a set of wires
  $W_B$.  A \emph{factor} $(f,\tilde{w})$ of size $s$ consists of a
  map $f\colon \{0,1\}^s\to [0,1]$ together with a sequence of wires
  $\tilde{w}\in W_B^*$.
  \todo{\textbf{CONCUR Rev2:} Here, W\_B* appears to be the set of sequences 
  of elements from W\_B. Does this match the interpretation from 
  Section 4 (and explain the sequential application of b?)? A clear 
  definition of W\_B* at the first occurrence would have been helpful.}
  We require that $\tilde{w}$ is of length $s$
  ($|\tilde{w}| = s$) and does not contain duplicates.

  Given a wire $w\in W_B$ and a multiset $\mathcal{F}$ of factors, we
  denote by $C_w(\mathcal{F})$ all those factors
  $(f,\tilde{w})\in \mathcal{F}$ where $\tilde{w}$ contains $w$. By
  $X_w(\mathcal{F})$ we denote the set of all wires that occur in
  the factors in $C_w(\mathcal{F})$, apart from $w$.
  % Furthermore, for such a factor, $P_w(f,\tilde{w})$ gives the
  % position of $\tilde{w}$ where $w$ occurs (i.e., $\tilde{w}_i = w$).

  % [TODO: Notation $P_w$ needed?]
\end{defi}

% We will often use $f$ to stand for a factor and denote its string of
% wires by
% $\tilde{w}^f$.\todo{Ba: denote a factor by $f$ or $(f,\tilde{w})$?}

We now consider an algorithm that computes the probability
distribution represented by a modular Bayesian network of type
$n\to m$. We assume that an evaluation map $\ev$, mapping generators
to their corresponding matrices, and an elimination order
$w_1,\dots,w_k$ of internal wires is given. Furthermore, given a
sequence of wires $\tilde{w} = w'_1\dots w'_s$ and a bitstring
$\X = \x_1\dots \x_s$, we define the substitution function
$b_{\tilde{w},\X}$ from wires to bits as
$b_{\tilde{w},\X}(w'_j) = \x_j$.

%\medskip\hrule\smallskip

\begin{algo}[Variable elimination]~
  \label{alg:variable-elimination}
  \todo{\textbf{FSTTCS Rev1:} Algorithm 11 is hard to follow and deserves more explanations (and possibly an example) to show how it is used to eliminate a variable. }

  \noindent\emph{Input:} An MBN $(B,\ev)$ of type $n\to m$
  \begin{itemize}
  \item Let $\mathcal{F}_0$ be the initial multiset of
    factors. For each node $v$ of type $n_{\ell(v)}\to m_{\ell(v)}$,
    it contains the matrix $\ev(v)$, represented as a factor $f$,
    together with the sequence $s(v)t(v)$. That is
    $f(\X\Y) = \ev(v)(\Y\mid \X)$ where $\X\in \{0,1\}^{n_{\ell(v)}}$,
    $\Y\in \{0,1\}^{m_{\ell(v)}}$.
  
  \item Now assume that we have a set $\mathcal{F}_{i-1}$ of factors
    and take the next wire $w_i$ in the elimination order. We choose
    all those factors that contain $w_i$ and compute a new
    factor~$(f,\tilde{w})$.
    Let $\tilde{w}$ be a sequence that contains all wires of
    $X_w(\mathcal{F}_{i-1})$ (in arbitrary order, but without
    duplicates). Let $s = |\tilde{w}|$. Then $f$ is a function of type
    $f\colon \{0,1\}^s \to [0,1]$, defined as:
    \[ f(\Y) = \sum_{\z\in \{0,1\}} \prod_{(g,\tilde{w}^g)\in
        C_{w_i}(\mathcal{F}_{i-1})}
      g(b_{\tilde{w}w_i,\Y\z}(\tilde{w}^g)).  \] We set
    $\mathcal{F}_i = \mathcal{F}_{i-1}\backslash
    C_{w_i}(\mathcal{F}_{i-1})\cup \{(f,\tilde{w})\}$.
  \item After the elimination of all wires we obtain a multiset of
    factors $\mathcal{F}_k$, whose sequences contain only input and
    output wires.
    The resulting probability distribution is
    $p\colon \{0,1\}^{n+m}\to [0,1]$, where $\X\in\{0,1\}^n$,
    $\Y\in\{0,1\}^m$, $\tilde{\iota} = i_1\dots i_n$,
    $\tilde{o} = \out(o_1)\dots \out(o_m)$:
    \[ p(\X\Y) = \prod_{(f,\tilde{w}^f)\in \mathcal{F}_k}
      f(b_{\tilde{\iota}\tilde{o},\X\Y}(\tilde{w}^f))
    \]
  \end{itemize}

\end{algo}

That is, given the next wire $w_i$ we choose all factors that contain
this wire, remove them from $\mathcal{F}_{i-1}$ and multiply them,
while eliminating the wire. The next set is obtained by adding the
new factor.
Finally, we have factors that contain only input and output wires and
we obtain the final probability distribution by multiplying them.

%\todo[inline]{Ba: Can this be generalized to $n\to m$?}

\begin{restatable}{prop}{PropVariableElimination}
  \label{prop:variable-elimination}
  Given a modular Bayesian network $(B,\ev)$ where $B$ is of type $n\to m$,
  Algorithm~\ref{alg:variable-elimination} computes its corresponding
  (sub-)stochastic matrix $M_\ev(B)$, that is
  \[ M_\ev(B)(\Y\mid \X) = p(\X\Y) \qquad \mbox{for $\X\in\{0,1\}^n$,
      $\Y\in\{0,1\}^m$}. \] Furthermore, the size of the largest
  factor in any multiset $\mathcal{F}_i$ is bounded by the width of
  the elimination ordering.
\end{restatable}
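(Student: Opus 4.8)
The plan is to prove the correctness equation $M_\ev(B)(\Y\mid\X) = p(\X\Y)$ by tracking, through the course of the algorithm, an invariant that says: at stage $i$, the product over all factors in $\mathcal{F}_i$, summed over all assignments of the wires eliminated so far, equals the ``partial sum'' of the big formula in Definition~\ref{def:eval}. Concretely, I would fix an input bitstring $\X\in\{0,1\}^n$ and output bitstring $\Y\in\{0,1\}^m$, and for each wire $w$ write $b(w)$ for the value assigned by a function $b\in\mathcal{B}$; the quantity to be computed is $\sum_{b\in\mathcal{B}}\prod_{v\in V}\ev(\ell(v))(b(t(v))\mid b(s(v)))$. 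I would define the invariant $(\star_i)$: for every assignment $\beta$ of the not-yet-eliminated internal wires $w_{i+1},\dots,w_k$ and of $\tilde\iota,\tilde o$ consistent with $\X,\Y$,
\[
  \sum_{\gamma}\ \prod_{(f,\tilde w)\in\mathcal{F}_i} f\big((\beta\cup\gamma)(\tilde w)\big)
  \ =\ \sum_{\gamma}\ \prod_{v\in V}\ev(\ell(v))\big((\beta\cup\gamma)(t(v))\mid(\beta\cup\gamma)(s(v))\big),
\]
where $\gamma$ ranges over assignments of $w_1,\dots,w_i$. For $i=0$ this is immediate from how $\mathcal{F}_0$ is built (each node becomes exactly one factor carrying $s(v)t(v)$, with $f(\X\Y)=\ev(v)(\Y\mid\X)$), and for $i=k$ it gives the final formula for $p(\X\Y)$ once one checks that $\mathcal{F}_k$'s factors mention only $\tilde\iota$ and $\tilde o$.

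The inductive step from $(\star_{i-1})$ to $(\star_i)$ is the heart of the argument and is precisely the distributive law, mirroring the hand computation shown for Fig.~\ref{fig:bn-variable-elimination}. When we eliminate $w_i$, the factors not in $C_{w_i}(\mathcal{F}_{i-1})$ do not mention $w_i$, so they pull out of the sum over $\z = b(w_i)$; the factors in $C_{w_i}(\mathcal{F}_{i-1})$ are exactly those depending on $w_i$, and by definition the new factor $f$ with $f(\Y)=\sum_{\z}\prod_{(g,\tilde w^g)\in C_{w_i}(\mathcal{F}_{i-1})} g(b_{\tilde w w_i,\Y\z}(\tilde w^g))$ is their product summed over $\z$. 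Substituting and regrouping yields $(\star_i)$. The two bookkeeping points I would be careful about: (1) $\tilde w$ must contain exactly $X_{w_i}(\mathcal{F}_{i-1})$, i.e. all wires other than $w_i$ appearing in the absorbed factors, so no information is lost and no spurious variable is introduced — this is where one checks the ``without duplicates, length $s$'' requirement matters; and (2) because internal wires never reappear once eliminated (any wire connecting two of the absorbed factors is either $w_i$ itself or ends up in $\tilde w$), the sum over $\gamma$ genuinely decomposes as claimed. A short lemma that $W_B$ partitions into $\{i_1,\dots,i_n\}$, $\{\out(o_1),\dots,\out(o_m)\}$, and $\mathit{IW}_B$, and that $\mathcal{B}$ is in bijection with assignments of $\mathit{IW}_B$ once $\X,\Y$ are fixed, cleans this up.

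For the width bound, I would argue that the elimination graph $U_i$ of Definition~\ref{def:elimination-order} exactly tracks the wire-sets of the current factors: the invariant here is that for every factor $(f,\tilde w)\in\mathcal{F}_i$, the wires in $\tilde w$ form a clique in $U_i$. This holds at $i=0$ because a node's factor carries $s(v)t(v)$ and those wires are pairwise adjacent in $U_0$ by construction. When $w_i$ is eliminated, the new factor's wire-set is $X_{w_i}(\mathcal{F}_{i-1})$, which is precisely the neighbourhood of $w_i$ among wires appearing in the absorbed cliques — and ``fill in'' makes exactly this set into a clique in $U_i$; all untouched factors remain cliques since removing $w_i$ and adding edges only helps. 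Hence the size of any factor at stage $i$ is at most the size of the largest clique in $U_i$, so the largest factor overall is bounded by the elimination width. The main obstacle I anticipate is not conceptual but notational: getting the bijection between $\mathcal{B}$ and wire-assignments precise enough that the distributive-law step can be stated cleanly, since generators with multiple outputs mean a single node contributes several target wires to one factor and one must be sure these are handled symmetrically with the source wires throughout.
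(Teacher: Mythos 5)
Your proof follows essentially the same route as the paper's: correctness via a stage-by-stage distributivity invariant equating the current factor multiset with the partial sums of Def.~\ref{def:eval}, and the width bound via the invariant that each factor's wire set forms a clique in $U_i$, maintained by the fill-in. One slip to fix in the displayed invariant $(\star_i)$: since eliminated wires never reappear in any factor of $\mathcal{F}_i$, the sum over $\gamma$ on the left-hand side is vacuous and inflates that side by $2^i$; the invariant should equate the plain product $\prod_{(f,\tilde w)\in\mathcal{F}_i} f(\beta(\tilde w))$ with the right-hand partial sum over $\gamma$, after which your inductive step (pulling the factors not containing $w_i$ out of the sum over $\z$) goes through exactly as you describe and matches the paper's computation.
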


\heading{Comparison to Treewidth}
%\label{sec:treewidth-termwidth}

We conclude this section by investigating the relation between
elimination width and the well-known notion of treewidth
\cite{bk:treewidth-computations-I}.

% Here we investigate the relation between the elimination width and
% treewidth in more detail.

% Elimination width can be compared to the treewidth of a causality
% graph, defined as the treewidth \cite{bk:treewidth-computations-I} of
% the graph $U_0$ from Def.~\ref{def:elimination-order}. More details,
% also on the related notion of term width, can be found in
% \full{Appendix~\ref{sec:treewidth-termwidth}}\short{\cite{bchk:uncertainty-reasoning-arxiv}}.

% \begin{restatable}{prop}{PropElimTreeTermWidth}
%   \label{prop:elim-tree-term-width}
%   Let $B$ a causality graph. Its elimination width is an upper bound
%   for treewidth and they coincide when $B$ is of type $0\to
%   0$.
%   % Furthermore the elimination width of $B$ is bounded by twice the
%   % term width of $B$.
% \end{restatable}
\todo{\textbf{FSTTCS Rev1:} Propositions 12 and 13 are rather expected, as one expects a particular ordering for elimination to be as efficient as possible, and at least more efficient that the worst ordering. 
These propositions are welcome, of course, because they prove that the proposed approach can only improve varaible elimination. However, they are not commented. 
Though these propositions are  desirable properties of MBN (regardless of whether these MBN originate from a Petri net or not), I think they are not a major result of the paper.
IMHO, the main result of the paper is the demonstration of the efficiency of MBNs through experiments.}

%
% In this subsection we assume that all the wires of the Bayesian
% network have to be eliminated, as in the case where we compute a
% normalization factor.
%
%[TODO: What about the general case, i.e., some output wires remain?]
%
% In this case, the elimination width corresponds to the treewidth of
% the graph $U_0$ constructed above. In our case, we define the
% treewidth of a Bayesian network as follows.

\begin{defi}[Treewidth of a causality graph]
  \label{def:treewidth-bn}
  Let $B = (V,\ell,s,\out)$ be a causality graph of type $n\to
  m$. A \emph{tree decomposition} for $B$ is an undirected tree
  $T = (V_T,E_T)$ such that
  \begin{itemize}
  \item every node $t\in V_T$ is associated with a bag $X_t\subseteq
    W_B$, 
  \item every wire $w\in W_B$ in contained in at least one bag $X_t$,
  \item for every node $v\in V$ there exists a bag $X_t$ such that all
    input and output wires of $v$ are contained in $X_t$ (i.e., all
    wires in $s(v)$ and $t(v)$ are in $X_t$)
    \emph{and}
  \item for every wire $w\in W_B$, the tree nodes $\{t\in V_T \mid
    w\in X_t\}$ form a subtree of $T$.
  \end{itemize}
  The width of a tree decomposition is given by $\max_{t\in V_T} |X_t|
  - 1$.

  The treewidth of $B$ is the minimal width, taken over all tree
  decompositions.
\end{defi}

Note that the treewidth of a causality graph corresponds to the
treewidth of the graph $U_0$ from Def.~\ref{def:elimination-order}. Now
we are ready to compare elimination width and treewidth.

\begin{restatable}{prop}{PropTreewidthBN}
  \label{prop:treewidth-bn}
  Elimination width is always an upper bound for treewidth and they
  coincide when $B$ is a causality graph of type $0\to 0$. 
  For a network of type $0\to m$ the treewidth may be strictly smaller
  than the elimination width.
\end{restatable}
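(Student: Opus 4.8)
The plan is to establish the three assertions separately, using the close correspondence between the graph $U_0$ from Definition~\ref{def:elimination-order} and tree decompositions, together with the factor-size bound of Proposition~\ref{prop:variable-elimination}.

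\emph{Elimination width bounds treewidth.} First I would recall the classical fact (for ordinary graphs) that any elimination ordering of a graph gives a tree decomposition of width at most the elimination width, and conversely: given an elimination ordering $w_1,\dots,w_k$ of the internal wires of $B$, one builds a tree decomposition of $U_0$ (extended so external wires are never eliminated) whose bags are exactly the cliques created during the elimination, i.e., for each $w_i$ the bag $\{w_i\}\cup X_i$, where $X_i$ is the neighbourhood of $w_i$ in $U_{i-1}$, plus one bag collecting all external wires. One then checks the four bag axioms of Definition~\ref{def:treewidth-bn}: coverage of wires is immediate; for every node $v$ the wires in $s(v)t(v)$ form a clique in $U_0$ and hence, by a standard argument, are all contained in the bag of the first of them eliminated; and the running-intersection (subtree) property follows from how the tree is assembled along the elimination order. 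Since every bag has size at most the elimination width, treewidth $\le$ elimination width. The key technical point here is simply to verify that allowing generators with several output wires changes nothing: such a node still contributes a single clique $s(v)t(v)$ to $U_0$, so the classical construction goes through verbatim.

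\emph{Equality in the type $0\to0$ case.} For the converse inequality when $B$ has no external wires, every wire is internal and hence eliminable, and the graph $U_0$ is an ordinary undirected graph on the vertex set $W_B$. Here I would invoke the well-known characterisation that for any graph the treewidth equals the minimum, over all elimination orderings, of the elimination width (the minimum ``induced width'' over orderings coincides with treewidth). Given an optimal tree decomposition of $U_0$, one reads off an elimination ordering by repeatedly picking a leaf bag, eliminating a vertex that appears only in that bag, and recursing; the fill-in edges created are already present in the decomposition, so no clique of size exceeding the width-plus-one is ever formed. Combined with the first part this gives elimination width $=$ treewidth when $n=m=0$.

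\emph{Strict gap for type $0\to m$.} For the last assertion I need a small explicit MBN of type $0\to m$ whose graph $U_0$ has treewidth strictly less than its elimination width. The natural obstruction is that external (output) wires are never eliminated, yet they still count towards clique size in the definition of elimination width, whereas a tree decomposition can ``spread'' the output wires across several bags. A concrete candidate: take a single generator of type $0\to m$ (so $U_0$ is a clique on the $m$ output wires, all external, plus nothing to eliminate); then $\mathit{IW}_B=\emptyset$, so the only graph in the sequence is $U_0=U_m$, a clique on $m$ vertices, giving elimination width $m$; but by clustering output wires one can build a tree decomposition of this causality graph (which must still have one bag containing $s(v)t(v)$, i.e., all $m$ output wires) -- so that example does \emph{not} separate them. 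Instead I would use two generators sharing structure so that $U_0$ is a long chain or cycle on output wires together with an internal wire: e.g.\ generators whose source/target sequences overlap pairwise in a path-like pattern, so that $U_0$ is essentially a path on $m$ external vertices (treewidth $1$), but since none of these vertices may be eliminated, the only graph in the sequence is $U_0$ itself and the largest clique has size $2$ -- still equal. The genuinely delicate point, and the main obstacle, is thus to exhibit the right topology: one needs a gadget where the \emph{fill-in} forced by having to eliminate a few internal wires among many non-eliminable external wires creates a large clique, while a tree decomposition -- free to place each external wire in its own part of the tree and connect them through small bags -- avoids it. A clean such gadget is a ``star of paths'': an internal hub wire $w$ lying on many generators, each of which also touches two distinct external wires; eliminating $w$ connects all its external neighbours into one big clique (elimination width $\approx$ degree of $w$), whereas the underlying causality graph has a tree decomposition of width $2$ or $3$ obtained by keeping $w$ together with one external pair at a time. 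I would present this gadget, compute both quantities explicitly for, say, $m=4$, and thereby establish the strict inequality.
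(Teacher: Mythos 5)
Your route is essentially the paper's: identify the treewidth of the causality graph with that of the moralised graph $U_0$ from Def.~\ref{def:elimination-order}, use the classical correspondence between elimination orderings and tree decompositions for the inequality and for the coincidence in the type $0\to 0$ case (the paper does not redo this argument but cites Theorem~6 of \cite{bk:treewidth-computations-I}), and separate the two notions by a gadget with a single internal ``hub'' wire whose elimination fills in a large clique on the never-eliminated output wires. Your ``star of paths'' is exactly the paper's counterexample (Fig.~\ref{fig:bn-treewidth}) in a slightly heavier form: the paper attaches to the hub $a$ a fan of $n$ generators with \emph{one} output wire each, getting elimination width $n$ against a star-shaped decomposition of width $1$; your version with two output wires per generator works the same way (elimination width $m$ versus treewidth $2$) and is correct, just not minimal. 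Your preliminary discussion rejecting the single $0\to m$ generator and the path/cycle candidates is sound.

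The one step that does not go through as written is your tree decomposition for the first inequality, specifically the extra ``bag collecting all external wires''. That bag has size equal to the number of external wires, which is in general \emph{not} bounded by the elimination width: take $m$ disjoint generators of type $0\to 1$, all of whose wires are outputs; there are no internal wires, $U_0$ consists of $m$ isolated vertices, so the elimination width is $1$, yet your global bag has size $m$. Hence the sentence ``since every bag has size at most the elimination width, treewidth $\le$ elimination width'' is false for the decomposition you construct, and the argument collapses precisely at the feature that distinguishes this setting from the classical one, namely that external wires are never eliminated. If you want to spell out the construction rather than cite \cite{bk:treewidth-computations-I} as the paper does, you must cover the residual graph $U_k$ on the output wires by small bags of its own (e.g., one bag per node all of whose wires are external, arranged along a decomposition of $U_k$), not by one bag containing every external wire; making that part work within the claimed bound is exactly where the care is needed, and it is the part your plan currently skips over.
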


The treewidth might be strictly smaller since we are now allowed to
eliminate output wires.  However, it is easy to see that the treewidth
plus the number of output wires always provides an upper bound for the
elimination width.

The paper \cite{bk:treewidth-computations-I} also discusses heuristics
for computing good elimination orderings, an opimization problem that
is $\mathsf{NP}$-hard.
Hence the treewidth of a causality graph gives us an upper bound for
the most costly step in computing its corresponding probability
distribution.  \cite{kbg:necessity-treewidth-bayesian} shows that a
small treewidth is actually a necessary condition for obtaining
efficient inference algorithms.

% \todo[inline]{Ba: this section is a bit unsatisfying, since the
%   relation between elimination width and term width is not very
%   strong. Could we modify the notion of term width a bit, so that we
%   get a better relation? $\leadsto$ future work?}

We can also compare elimination width to the related notion of term
width, more details can be found in
\full{Appendix~\ref{sec:termwidth}}\short{\cite{bchk:uncertainty-reasoning-arxiv}}.

\section{Implementation and Runtime Results}
\label{sec:implementation}
%\todo[inline]{\textbf{Rev3:} The performed experiments are a bit simplistic 
%(i.e., randomly generated nets with only 2 target states for each 
%transition). It would be nice to see whether their algorithms have 
%any overhead compared to just computing the matrices. So a "worst-case"
%example (presumably with a large post-conditions) would be interesting 
%to have.}
\todo[inline]{\textbf{FSTTCS Rev1:} Now, these results need further discussion: firts, 
they are average results, achieved after a bounded number of steps. On may wonder what occurs in the worst cases. In fact, if a Petri net is live, after a few steps one can expect 
in general any place to have a strictly positive probability to be marked, and similarly one can expect all transitions to be firable with probability >0. So the hypothesis of few 
relevant transitions and places holds for a limited number of steps. So it is hard to figure whether MBN bring a general improvement to Stochastic net analysis or a 
temporary advantage for some nets on a limited number of simulation steps. }
\todo[inline]{\textbf{FSTTCS Rev2:} One question for the authors here is whether they have implemented the SIR model, and other practical examples as well and if so what is their performance results on them. Do they satisfy the low connectivity assumptions -- in other words can the motivation be validated by the implementation?}
We extended the implementation presented in the predecessor paper
\cite{chhk:update-ce-nets-bayesian} by incorporating probabilistic
Petri nets and elimination orderings, in order to evaluate the
performance of the proposed concepts.  The implementation is open
source and freely available from
GitHub.\footnote{\url{https://github.com/RebeccaBe/Bayesian-II}}

Runtime results were obtained by randomly generating Petri nets with
different parameters, e.g. number of places, transitions and tokens,
initial marking. The maximal number of places in pre- and
post-conditions is restricted to three and at most
five transitions are enabled in each step.
%\todo{\textbf{Rev1:} This restriction (three transitions) is substantial 
%and an argument is missing whether this is reasonable in realistic Petri 
%nets.}
With these parameters, the worst case scenario is the creation of a
matrix of type $30 \rightarrow 30$.  After the initialization of a
Petri net, which can be interpreted with either semantics
(independent/stochastic), transitions and their probabilities are
picked at random. Then we observe either success or failure and update
the probability distribution accordingly.

We select the elimination order via a heuristics by preferring wires
with minimal degree in the graph $U_i$
(cf. Def~\ref{def:elimination-order}).
%\todo{Ba: I rewrote this sentence. OK? Reb: Yup!}
Furthermore we apply a few optimizations: Nodes with no output wires
will be evaluated first, nodes without inputs second.  The observation
of a failure will generate a diagonal matrix, which enables an
optimized evaluation, as its input and output wires have to carry the
same value (otherwise we obtain a factor $0$).  In addition, we use
optimizations whenever we have definitive knowledge about the marking
of a particular place (of a pre-condition), by drawing conclusions
about the ability to fire certain transitions.
%\todo{\textbf{Rev1:} In the result, it is not mentioned whether the MBN 
%and the optimized analysis compute the same results.
%Also, the memory demand may be neglectable but should be mentioned since 
%it is an essential computation demand.}

The plot on the left of Fig.~\ref{fig:runtime_results}
%\todo{Ba: Is
%  this for the independent or stochastic semantics? What about the
%  plot on the right? Is this to visualize the outliers?  Reb: This is
%  the independent case (but the runtime would be virtually the same
%  for stoch).The right plot visualizes the performance for very large
%  PNs (up to 100 places). I added a reference in the next paragraph}
compares runtimes when incorporating ten
success/failure observations directly on the joint distribution
(i.e. the naive representation of a probability distribution)
%\todo{\textbf{Rev2:} Not clear what this exactly means. How is the 
%“direct computation via joint probabilities” defined? Also: where is 
%this implementation coming from? Or is it part of your implementation 
%on github? If so, state this here please.}
versus our MBN implementation. We initially assume a uniform distribution of
tokens and calculate the probability that the first place is marked
after the observations. Both approaches evaluate the same Petri net and
therefore calculate the same results. The data is for the independent
semantics, but it is very similar for the stochastic semantics.

While the runtime increases exponentially when using joint
distributions, our MBN implementation stays relatively constant (see
Fig.~\ref{fig:runtime_results}, left). Due to memory issues, handling
Petri nets with more than 30 places is not anymore feasible for the
direct computation of joint distributions.
%
%\todo{\textbf{Ba:} average vs. median}
%
We use the median for comparison (see Fig.~\ref{fig:runtime_results},
left), but if an MBN consists of very large matrices, the evaluation
time will be rather high. The right plot of
Fig.~\ref{fig:runtime_results} shows this correlation, where colours
denote the runtime and the $y$-axis represents the number of wires
attached to the largest matrix. (Here we actually count equivalence
classes by grouping those wires that have to carry the same value, due
to their attachment to a diagonal matrix, see also the optimization
explained above.)

\begin{figure}
  \includegraphics[width=.476357146\textwidth]{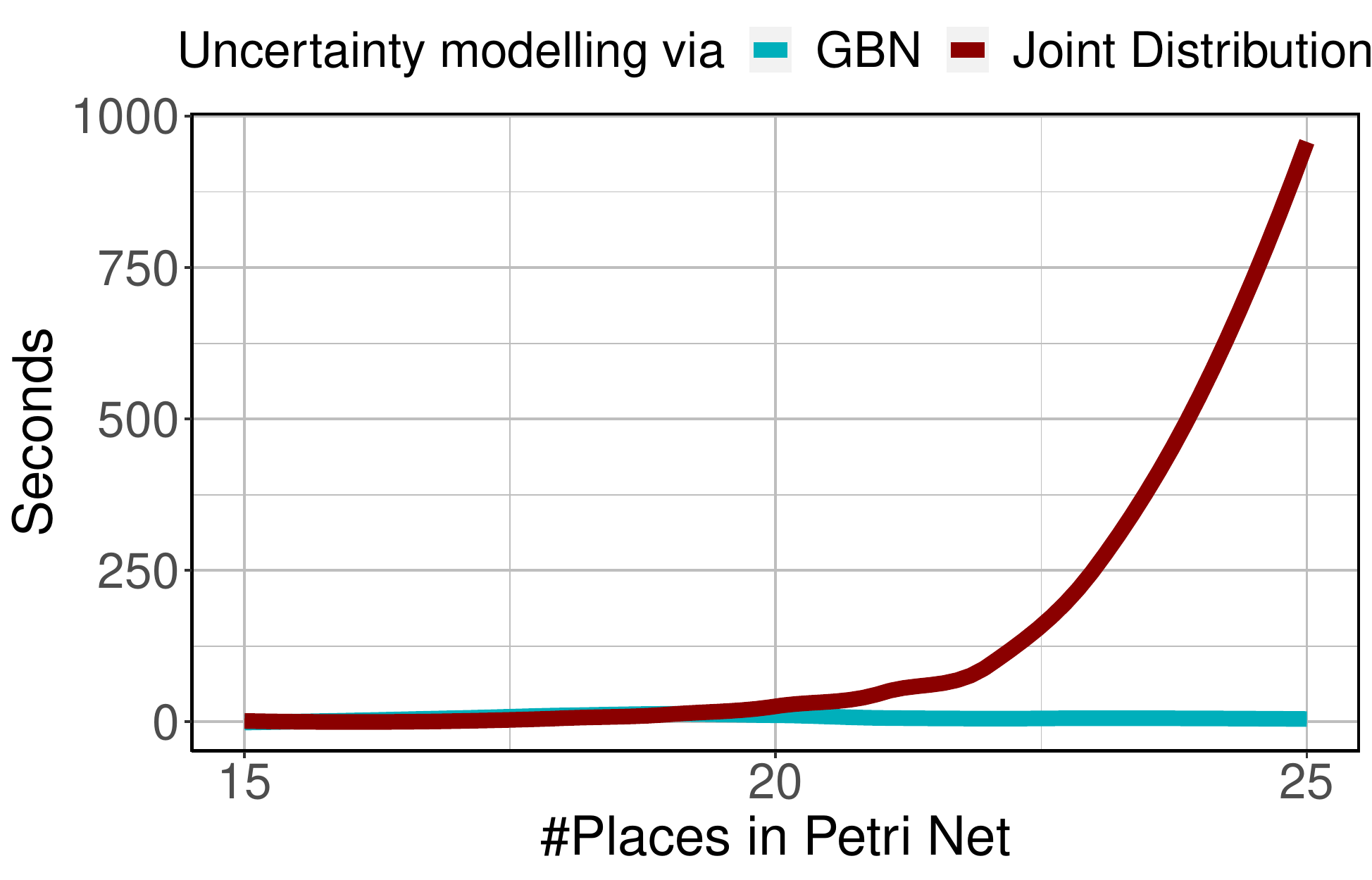}
  \hspace{0.2em}
  \includegraphics[width=.500785718\textwidth]{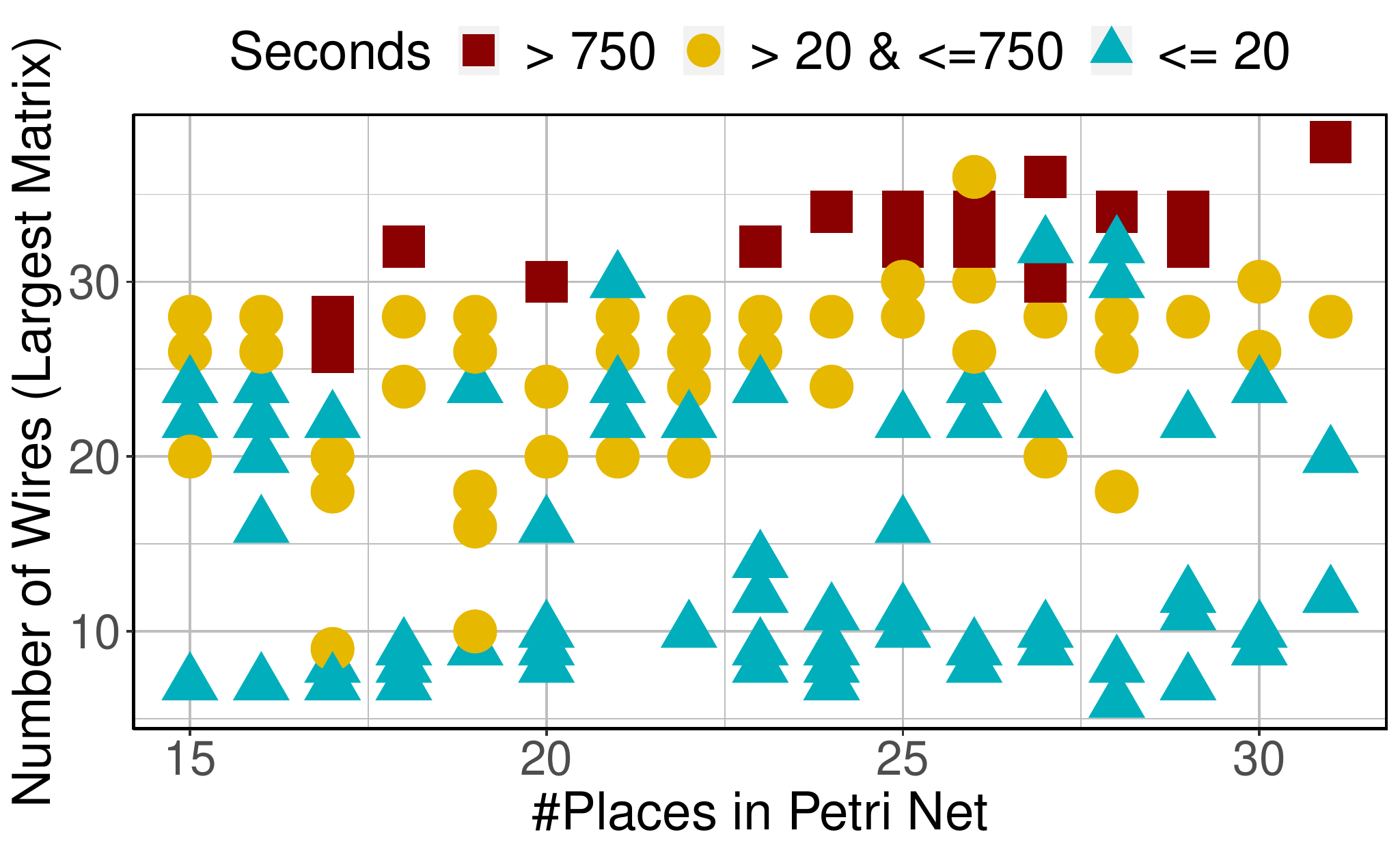}
  
%  \vspace{-0.2cm}
  
  \caption{Left: Median of runtimes performing after 10 transitions on
    a Petri net. Right: Effect of large matrices on the runtimes of
    the MBN implementation.}
  \label{fig:runtime_results}

%  \vspace{-0.3cm}
\end{figure}
\todo{\textbf{FSTTCS Rev1:} Figures not readable in b/w}

The advantage of our approach decreases when we have substantially
more places in the pre- and post-set, more transitions that may fire
and a larger number of steps, since then the Bayesian network is more
densely connected and contains larger matrices.  Furthermore, one
might generally expect the state (containing tokens or not) of places
of the Petri net to become more and more coupled over time, as more
transitions have fired, decreasing the performance improvement we gain
from using MBNs.  However, recall that the transitions that can fire
at any time are explicitly controlled by the input $p_T^n$. This
allows our model to capture situations where different parts of the
network stay uncoupled over time and where using MBNs is an advantage.
Furthermore the observation of a failure allows an optimizated
variable elimination, as explained above.

%\todo[inline]{%
%  Reb: Should I also talk about the limitations 
%  (i.e. more runs -> more connected -> less efficient when evaluating)?
%  And in what extent? Ba: I wrote something. Can you have a look?
%  Reb: Looks good! I just added something about bigger martices.

  %Possible content:
  
  %\tri at the beginning: uniform distribution, 10 steps, probability that first place is marked (including
  %normalization)

  %\tri number of places in pre-/post-condition (up to two?), number
  %of transitions enabled in each step (up to three?)
  
  %\tri variance in the data, same data for joint distributions and
  %MBNs? --> yes, same data

  %\tri only success or also failure?
  
  %\tri optimization for $F$-matrices

  %\tri optimization if places are known to be marked/unmarked

  %\tri further simplifications? Laws (F1)--(F5) from
  %\cite{chhk:update-ce-nets-bayesian} --> No
  
%}

\section{Conclusion}
\label{sec:conclusion}

\todo[]{\textbf{FSTTCS Rev2:} One immediate question!
here is how this approach relates to usual belief-propagation methods that are used for efficient marginal computations on Bayesian networks? Can they be applied here and if not, where do they fail? }

We propose a framework for uncertainty reasoning for probabilistic
Petri nets that represents probability distributions compactly via
Bayesian networks. In particular we describe how to efficiently update
and evaluate Bayesian networks.

\smallskip

\noindent\emph{Related work:} Naturally, uncertainty reasoning has been
considered in many different scenarios (for an overview see
\cite{h:reasoning-uncertainty}). Here we review only those approaches
that are closest to our work.

In \cite{chhk:update-ce-nets-bayesian} we studied a simpler scenario
for nets whose transitions do not fire probabilistically, but are
picked by the observer, resulting in a restricted set of update
operations.  Rather than computing marginal distributions directly via
variable elimination as in this paper, our aim there was to transform
the resulting modular Bayesian network into an ordinary one. Since the
updates to the net were of a simpler nature, we were able to perform
this conversion. Here we are dealing with more complex updates where
this can not be done efficiently. Instead we are concentrating on
extracting information, such as marginal distributions, from a
Bayesian network.

Furthermore, uncertainty reasoning as described in
Sct.~\ref{sec:uncertainty-reasoning} is related to the methods used
for hidden Markov models \cite{r:hidden-markov-models}, where the
observations refer to the states, whereas we (partially) observe the
transitions.

There are several proposals which enrich Petri nets with a notion of
uncertainty: possibilistic Petri nets
\cite{llc:possibilistic-petri-nets}, plausible Petri nets
\cite{ccpa:plausible-petri-nets} that combine discrete and continuous
processes or fuzzy Petri nets
\cite{cvd:fuzzy-pn,s:generalized-fuzzy-pn} where firing of transitions
is governed by the truth values of statements. Uncertainty in
connection with Petri nets is also treated in
\cite{kb:uncertainty-pn,jr:uncertainty-initial-petri}, but without
introducing a formal model. As far as we know neither approach
considers symbolic representation of probability distributions via
Bayesian networks.

In \cite{bmm:unifying-bn-pn} the authors exploit the fact that Petri
nets also have a monoidal structure and describe how to convert an
occurrence (Petri) net with a truly concurrent semantics into a
Bayesian network, allowing to derive probabilistic information, for
instance on whether a place will eventually be marked. This is
different from our task, but it will be interesting to compare further
by unfolding our nets and equipping them with a truly concurrent
semantics, based on the probabilistic information from the
time-inhomogeneous Markov chain.

We instead propose to use Bayesian networks as symbolic
representations of probability distributions. An alternative would be
to employ multi-valued (or multi-terminal) binary decision diagrams
(BDDs) as in \cite{hms:multi-bdds-ctmcs}. An exact comparison of both
methods is left for future work. We believe that multi-valued BDDs
will fare better if there are only few different numerical values in
the distribution, otherwise Bayesian networks should have an
advantage.

As mentioned earlier, representing Bayesian networks by PROPs or
string diagrams is a well-known concept, see for instance
\cite{f:causal-theories,jz:influence-bayesian}. The paper
\cite{jkz:causal-string-diagram} describes another transformation of
Bayesian networks by string diagram surgery that models the effect of
an intervention.

In addition there is a notion of dynamic Bayesian networks
\cite{m:dynamic-bayesian-networks}, where a random variable has a
separate instance for each time slice. We instead keep only one
instance of every random variable, but update the Bayesian network
itself.

In addition to variable elimination, a popular method to compute marginals of a probability distribution is based on belief propagation and junction trees \cite{DBLP:junction-tree-algorithm}. In order to assess the potential efficiency gain, this approach has to be adapted for modular Bayesian networks. However due to the dense interconnection and large matrices of MBNs, an improvement in runtime is unclear and deserves future investigation.

% \todo[inline]{%
%   Related work:
  
%   \tri Cite: Montanari paper \cite{bmm:unifying-bn-pn},
%   Jacobs/Kissinger/Zanasi (FoSSaCS '19)
%   \cite{jkz:causal-string-diagram}

%   \tri Hidden Markov models \cite{r:hidden-markov-models}
  
%   \tri Compare with multi-valued BDDs \cite{hms:multi-bdds-ctmcs}

%   \tri Cite Joseph Y. Halpern ``Reasoning about Uncertainty''
%   \cite{h:reasoning-uncertainty} and check for other related work

%   \tri Compare with
%   possibilistic\cite{llc:possibilistic-petri-nets}/plausible\cite{ccpa:plausible-petri-nets}/fuzzy\cite{cvd:fuzzy-pn,s:generalized-fuzzy-pn}/whatever
%   Petri nets, also Kuch{\'a}rik/Balogh \cite{kb:uncertainty-pn}

%   \tri different from other approaches, we are observing success \&
%   failure (apart from Jarkass \& Rombaut
%   \cite{jr:uncertainty-initial-petri})

% }

\smallskip

\noindent\emph{Future work:}
One interesting avenue of future work is to enrich our model with
timing information by considering continuous-time Markov chains
\cite{t:introduction-markov-chains}, where firing delays are sampled
from an exponential distribution. Instead of asking about the
probability distribution after $n$~steps we could instead ask about
the probability distribution at time~$t$.

We would also like to add mechanisms for controlling the system, such
as transitions that are under the control of the observer and can be
fired whenever enabled. Then the task of the observer would be to
control the system and guide it into a desirable state. In this vein
we are also interested in studying stochastic games
\cite{c:complexity-stochastic-game} with uncertainty.

The interaction between the structure of the Petri net and the
efficiency of the analysis method also deserves further study. For
instance, are free-choice nets \cite{de:free-choice-petri} -- with
restricted conflicts of transitions -- more amenable to this type of
analysis than arbitrary nets?

Recently there has been a lot of interest in modelling compositional
systems via string diagrams, in the categorical setting of symmetric
monoidal categories or PROPs \cite{ck:picturing-quantum}. In this
context it would be interesting to see how the established notion of
treewidth \cite{bk:treewidth-computations-I} and its algebraic
characterizations \cite{ce:graph-structure-mso} translates into a notion of
width for string diagrams. We started to study this for the notion of
term width, but we are not aware of other approaches, apart from
\cite{cs:compositional-graph-theory} which considers monoidal width.

% \todo[inline]{%
%   Future work:

%   \tri consider timing information (CTMCs)
%   \cite{t:introduction-markov-chains}, times sampled from an
%   exponential distribution

%   \tri what about free-choice nets? \cite{de:free-choice-petri}

%   \tri relation between termwidth and treewidth (Sobocinski
%   \cite{cs:compositional-graph-theory})

% }

%\todo{Ba: make references uniform, first names of authors, etc.}

\bibliographystyle{plain}
\bibliography{references}

\begin{thebibliography}{10}

\bibitem{bk:treewidth-computations-I}
H.L. Bodlaender and A.M.C.A. Koster.
\newblock Treewidth computations {I}. {U}pper bounds.
\newblock Technical Report UU-CS-2008-032, Department of Information and
  Computing Sciences, Utrecht University, September 2008.

\bibitem{bmm:unifying-bn-pn}
R.~Bruni, H.~C. Melgratti, and U.~Montanari.
\newblock Bayesian network semantics for {P}etri nets.
\newblock {\em Theoretical Computer Science}, 807:95--113, 2020.

\bibitem{c:complex-networks-ppp}
B.~Cabrera.
\newblock {\em Analyzing and Modeling Complex Networks -- Patterns, Paths and
  Probabilities}.
\newblock PhD thesis, Universit{\"a}t Duisburg-Essen, 2019.

\bibitem{chhk:update-ce-nets-bayesian}
B.~Cabrera, T.~Heindel, R.~Heckel, and B.~K{\"o}nig.
\newblock Updating probabilistic knowledge on {Condition/Event} nets using
  {Bayesian} networks.
\newblock In {\em Proc. of CONCUR '18}, volume 118 of {\em {LIPIcs}}, pages
  27:1--27:17. Schloss Dagstuhl -- Leibniz Center for Informatics, 2018.

\bibitem{cvd:fuzzy-pn}
J.~Cardoso, R.~Valette, and D.~Dubois.
\newblock Fuzzy {P}etri nets: An overview.
\newblock In {\em Proc. of 13th Triennal World Congress}, 1996.

\bibitem{cs:compositional-graph-theory}
A.~Chantawibul and P.~Soboci{\'{n}}ski.
\newblock Towards compositional graph theory.
\newblock In {\em Proc. of MFPS XXXI}. Elsevier, 2015.
\newblock {ENTCS} 319.

\bibitem{charniak1991bayesian}
E.~Charniak.
\newblock Bayesian networks without tears.
\newblock {\em AI magazine}, 12(4):50--50, 1991.

\bibitem{ccpa:plausible-petri-nets}
M.~Chiachio, J.~Chiachio, D.~Prescott, and J.D. Andrews.
\newblock A new paradigm for uncertain knowledge representation by plausible
  {P}etri nets.
\newblock {\em Information Sciences}, 453:323--345, 2018.

\bibitem{ck:picturing-quantum}
B.~Coecke and A.~Kissinger.
\newblock {\em Picturing Quantum Processes: A First Course in Quantum Theory
  and Diagrammatic Reasoning}.
\newblock Cambridge University Press, 2017.

\bibitem{c:complexity-stochastic-game}
A.~Condon.
\newblock The complexity of stochastic games.
\newblock {\em Information and Computation}, 96(2):203--224, 1992.

\bibitem{ce:graph-structure-mso}
B.~Courcelle and J.~Engelfriet.
\newblock {\em Graph Structure and Monadic Second-Order Logic, A
  Language-Theoretic Approach}.
\newblock Cambridge University Press, June 2012.

\bibitem{d:modeling-reasoning-bn}
A.~Darwiche.
\newblock {\em Modeling and Reasoning with Bayesian Networks}.
\newblock Cambridge University Press, 2011.

\bibitem{d:bucket-elimination}
R.~Dechter.
\newblock Bucket elimination: A unifying framework for reasoning.
\newblock {\em Artificial Intelligence}, 113:41--85, 1999.

\bibitem{de:free-choice-petri}
J.~Desel and J.~Esparza.
\newblock {\em Free Choice {P}etri Nets}, volume~40 of {\em Cambridge Tracts in
  Theoretical Computer Science}.
\newblock Cambridge University Press, 1995.

\bibitem{f:causal-theories}
B.~Fong.
\newblock Causal theories: A categorical perspective on {B}ayesian networks.
\newblock Master's thesis, University of Oxford, 2012.
\newblock arXiv:1301.6201.

\bibitem{fgg:bayesian-network-classifiers}
N.~Friedman, D.~Geiger, and M.~Goldszmidt.
\newblock Bayesian network classifiers.
\newblock {\em Machine Learning}, 29:131--163, 1997.

\bibitem{GS:Markov}
C.M. Grinstead and J.L. Snell.
\newblock {\em Introduction to probability}.
\newblock American Mathematical Soc., 2012.

\bibitem{h:reasoning-uncertainty}
J.Y. Halpern.
\newblock {\em Reasoning about Uncertainty}.
\newblock MIT Press, second edition edition, 2017.

\bibitem{hms:multi-bdds-ctmcs}
H.~Hermanns, J.~Meyer-Kayser, and M.~Siegle.
\newblock Multi-terminal binary decision diagrams to represent and analyse
  continuous-time markov chains.
\newblock In {\em Proc. of NSMC '99 (International Workshop on the Numerical
  Solution of Markov Chains)}, pages 188--207, 1999.

\bibitem{jkz:causal-string-diagram}
B.~Jacobs, A.~Kissinger, and F.~Zanasi.
\newblock Causal inference by string diagram surgery.
\newblock In {\em Proc. of FOSSACS '19}, pages 313--329. Springer, 2019.
\newblock {LNCS} 11425.

\bibitem{jz:influence-bayesian}
B.~Jacobs and F.~Zanasi.
\newblock A formal semantics of influence in {B}ayesian reasoning.
\newblock In {\em Proc. of MFCS}, volume~83 of {\em {LIPIcs}}, pages
  21:1--21:14, 2017.

\bibitem{jr:uncertainty-initial-petri}
I.~Jarkass and M.~Rombaut.
\newblock Dealing with uncertainty on the initial state of a {P}etri net.
\newblock In {\em Proc. of UAI '98 (Uncertainty in Artificial Intelligence)},
  pages 289--295, 1998.

\bibitem{keeling2005networks}
M.J. Keeling and K.T.D. Eames.
\newblock Networks and epidemic models.
\newblock {\em Journal of the Royal Society Interface}, 2(4):295--307, 2005.

\bibitem{kb:uncertainty-pn}
M.~Kuch{\'a}rik and Z.~Balogh.
\newblock Modeling of uncertainty with {P}etri nets.
\newblock In {\em Proc. of ACIIDS '19 (Asian Conference on Intelligent
  Information and Database Systems)}, pages 499--509. Springer, 2019.
\newblock {LNAI} 11431.

\bibitem{kbg:necessity-treewidth-bayesian}
J.H.P. Kwisthout, H.L. Bodlaender, and L.C. Van Der~Gaag.
\newblock The necessity of bounded treewidth for efficient inference in
  {B}ayesian networks.
\newblock In {\em Proc. of ECAI '10 (European Conference on Artificial
  Intelligence)}, volume 215 of {\em Frontiers in Artificial Intelligence and
  Applications}, pages 237--242. IOS Press, 2010.

\bibitem{llc:possibilistic-petri-nets}
J.~Lee, K.F.R. Liu, and W.~Chiang.
\newblock Modeling uncertainty reasoning with possibilistic {P}etri nets.
\newblock {\em {IEEE} Transactions on Systems, Man, and Cybernetics, Part {B}},
  33(2):214--224, 2003.

\bibitem{DBLP:junction-tree-algorithm}
V.~Lepar and P.P. Shenoy.
\newblock A comparison of {L}auritzen-{S}piegelhalter, {H}ugin, and
  {S}henoy-{S}hafer architectures for computing marginals of probability
  distributions.
\newblock In G.F. Cooper and S.~Moral, editors, {\em Proc. of UAI '98
  (Uncertainty in Artificial Intelligence)}, pages 328--337, 1998.

\bibitem{m:categorical-algebra}
S.~MacLane.
\newblock Categorical algebra.
\newblock {\em Bull. Amer. Math. Soc.}, 71(1):40--106, 1965.

\bibitem{m:stochastic-petri}
M.~Ajmone Marsan.
\newblock Stochastic {P}etri nets: an elementary introduction.
\newblock In {\em Proc. of the European Workshop on Applications and Theory in
  Petri Nets}, volume 424 of {\em Lecture Notes in Computer Science}, pages
  1--29. Springer, 1990.

\bibitem{m:dynamic-bayesian-networks}
K.~Murphy.
\newblock {\em Dynamic {B}ayesian Networks: Representation, Inference and
  Learning}.
\newblock PhD thesis, UC Berkeley, Computer Science Division, 2002.

\bibitem{p:bayesian-networks}
J.~Pearl.
\newblock Bayesian networks: A model of self-activated memory for evidential
  reasoning.
\newblock In {\em Proc. of the 7th Conference of the Cognitive Science
  Society}, pages 329--334, 1985.
\newblock UCLA Technical Report CSD-850017.

\bibitem{r:hidden-markov-models}
L.~R. Rabiner.
\newblock A tutorial on {H}idden {M}arkov {M}odels and selected applications in
  speech recognition.
\newblock {\em Proceedings of the IEEE}, 77(2):257--286, 1989.

\bibitem{r:petri-nets}
W.~Reisig.
\newblock {\em {Petri Nets: An Introduction}}.
\newblock EATCS Monographs on Theoretical Computer Science. Springer-Verlag,
  Berlin, Germany, 1985.

\bibitem{s:generalized-fuzzy-pn}
Z.~Suraj.
\newblock Generalised fuzzy {P}etri nets for approximate reasoning in decision
  support systems.
\newblock In {\em Proc. of CS\&P '12 (International Workshop on Concurrency,
  Specification and Programming)}, volume 928 of {\em {CEUR} Workshop
  Proceedings}, pages 370--381. CEUR-WS.org, 2012.

\bibitem{t:introduction-markov-chains}
A.~Tolver.
\newblock An introduction to {M}arkov chains.
\newblock Department of Mathematical Sciences, University of Copenhagen,
  November 2016.

\bibitem{wbk:BN-app}
W.~Wiegerinck, W.~Burgers, and B.~Kappen.
\newblock Bayesian networks, introduction and practical applications.
\newblock In {\em Handbook on Neural Information Processing}, pages 401--431.
  Springer, 2013.

\end{thebibliography}

\full{

\appendix

\section{PROPs}
\label{sec:props}

Both, causality graphs and (sub-)stochastic matrices, can be seen in
the context of PROPs \cite{m:categorical-algebra} (where PROP stands
for ``products and permutations category''), a categorical notion that
formalizes string diagrams.

Since we do not need the full theory behind PROPs to obtain our
results and because of space restrictions, we did not define PROPs
explicitly within the main part of the paper.

However, here we formally introduce the mathematical structure
underlying modular Bayesian networks: CC-structured PROPs, i.e., PROPs
with commutative comonoid structure, a type of strict symmetric
monoidal category.
 
A CC-structured PROP is a symmetric monoidal category whose objects
are natural numbers and arrows are terms. In particular every term $t$
has a type $n \rightarrow m$ with $n, m \in \nat_0$.

% \begin{wrapfigure}{r}{0.282\textwidth}
%   \begin{center}
%     ~\\[-3ex]
%     \begin{tikzpicture}
%       \draw[black,  very thick, double] (0,0) -- ++(1,0) node[pos=.5, above=-.05]{$n$};
%       \draw[white,  thick] (2.5,0) -- (2.5,-.5) node [black, midway, sloped] {$\dots$};
%       \draw[white,  thick] (1,0) -- ++(0.75,0) node[pos=.5, black]{$=$};
%       \draw[black,  thick] (2,.5) -- ++(1,0);
%       \draw[black,  thick] (2,.25) -- ++(1,0);
%       \draw[black,  thick] (2,0) -- ++(1,0);
%       \draw[black,  thick] (2,-.5) -- ++(1,0);
%       \draw [decorate,decoration={brace,amplitude=6pt,mirror,raise=4pt},yshift=0pt] (3,-.6) -- (3,.6)node [black,midway,xshift=0.55cm] {$n$};
%     \end{tikzpicture}\vspace{-6ex}
%   \end{center}
% \end{wrapfigure}

There are two operators that can be used to combine terms: sequential
(;) and parallel ($\otimes$) composition, also called tensor (see
Fig~\ref{fig:Kronecker-Multiplication}). Sequential composition
corresponds to the categorical composition and combines two terms
$t_1: n \rightarrow l$ and $t_2\colon l \rightarrow m$ to a term
$t_1 ; t_2 = t\colon n \rightarrow m$. When applied to
$t_1: n_1 \rightarrow m_1$ and $t_2\colon n_2 \rightarrow m_2$, the
tensor operator produces a term
$t_1 \otimes t_2 = t\colon n_1+n_2 \rightarrow m_1 + m_2$.
Furthermore, there are atomic terms: generators $g\in G$ (from a given
set $G$) of fixed type and four different constants
(Fig.~\ref{fig:constants}): $\id\colon 1\to 1$ (identity),
$\nabla\colon 1\to 2$ (duplicator), $\sigma\colon 2\to 2$
(permutation) and $\top\colon 1\to 0$ (terminator). There are also
derived constants of higher arity that are defined in
Table~\ref{tab:axioms-cc-prop} (upper half).

We require that all the axioms in Table~\ref{tab:axioms-cc-prop}
(lower half) are satisfied. While PROPs always contain the constant
$\sigma$, $\nabla$ and $\top$ are specific to CC-structured
PROPs.
The freely generated CC-structured PROP can be obtained by taking
all terms obtained inductively from generators and constants via
sequential composition and tensor, quotiented by the axioms.

\begin{figure}[h]
  \begin{minipage} {0.48\linewidth}
    \begin{tikzpicture}
      \centering
      \draw[black,  thick] (4,0) rectangle ++(.5,.75) node[pos=.5] {$f'$};
      \draw[black,  thick] (3,0) rectangle ++(.5,.75) node[pos=.5] {$f$};
      \draw[black,  thick] (0,0) rectangle ++(1.25,.75) node[pos=.5] {$f;f'$};

      \draw[black,  very thick, double] (-.5,0.375) -- ++(0.5,0)node[pos=.5, above=-.05]{$n$};
      \draw[black,  very thick, double] (1.25,0.375) -- ++(0.5,0)node[pos=.5, above=-.05]{$m$};
      \draw[white,  thick] (1.75,0.375) -- ++(0.75,0) node[pos=.5, black]{$=$};
      \draw[black,  very thick, double] (2.5,0.375) -- ++(0.5,0)node[pos=.5, above=-.05]{$n$};
      \draw[black,  very thick, double] (3.5,0.375) -- ++(0.5,0)node[pos=.5, above=-.05]{$k$};
      \draw[black,  very thick, double] (4.5,0.375) -- ++(0.5,0)node[pos=.5, above=-.05]{$m$};
    \end{tikzpicture}
  \end{minipage}
  \begin{minipage} {0.48\linewidth}
    \centering
    \begin{tikzpicture}
      \centering
      \draw[black,  thick] (3.75,-.5) rectangle ++(.5,.75) node[pos=.5] {$f_1$};
      \draw[black,  thick] (3.75,.5) rectangle ++(.5,.75) node[pos=.5] {$f_2$};
      \draw[black,  thick] (-.5,0) rectangle ++(1.5,.75) node[pos=.5] {$f_1 \otimes f_2$};

      \draw[black,  very thick, double] (-1.5,0.375) -- ++(1,0)node[pos=.3, above=-.05]{$n_1 + n_2$};
      \draw[black,  very thick, double] (1,0.375) -- ++(1,0)node[pos=.8, above=-.05]{$m_1 + m_2$};
      \draw[white,  thick] (2,0.375) -- ++(1.25,0) node[pos=.65, black]{$=$};
      \draw[black,  very thick, double] (3.25,.875) -- ++(0.5,0)node[pos=.5, above=-.05]{$n_1$};
      \draw[black,  very thick, double] (4.25,0.875) -- ++(0.5,0)node[pos=.65, above=-.05]{$m_1$};
      \draw[black,  very thick, double] (3.25,-.125) -- ++(0.5,0)node[pos=.5, above=-.05]{$n_2$};
      \draw[black,  very thick, double] (4.25,-.125) -- ++(0.5,0)node[pos=.65, above=-.05]{$m_2$};
    \end{tikzpicture}
  \end{minipage}
  \caption{String diagrammatic representation of the operators
    $\otimes$ and $;$ within a graph. Thick double lines represent
    several wires. The types are $f\colon n \rightarrow k$,
    $f'\colon k \rightarrow m$, $f_1\colon n_1 \rightarrow
    m_1$, $f_2\colon n_2 \rightarrow m_2$.}
  \label{fig:Kronecker-Multiplication}
\end{figure}
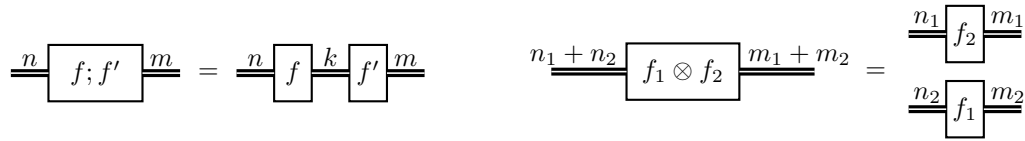

\begin{table}[h]
  \centering
  \cbox{
    \vspace{-0.4cm}
    \begin{eqnarray*}
      && \id_1 = \id \qquad \id_{n+1} = \id_n\otimes \id_1 \qquad\qquad
      \top_1 = \top\qquad \top_{n+1} = \top_n\otimes \top \\
      && \sigma_{n,0} = \sigma_{0,n} = \id_n \qquad 
      \sigma_{n+1,1} = (\id\otimes \sigma_{n,1});(\sigma\otimes \id_n)
      \\
      && \qquad \sigma_{n,m+1} = (\sigma_{n,m}\otimes \id_1);(\id_m\otimes
      \sigma_{n,1}) \\
      && \nabla_1 = \nabla \qquad \nabla_{n+1} = (\nabla_n\otimes
      \nabla);(\id_n \otimes \sigma_{n,1}\otimes \id)
    \end{eqnarray*}
    \vspace{-0.5cm}
    \hrule
    \vspace{-0.3cm}
    \begin{eqnarray*}
      && (t_1;t_3) \otimes (t_2;t_4) = (t_1\otimes t_2);(t_3\otimes
      t_4) \qquad
      (t_1;t_2);t_3 = t_1;(t_2;t_3) \\
      && \id_n;t = t = t;\id_m \qquad
      (t_1\otimes t_2)\otimes t_3 = t_1\otimes(t_2\otimes t_3) \qquad
      \id_0 \otimes t = t = t\otimes \id_0 \\
      && \sigma;\sigma = \id_2 \qquad
      (t\otimes\id_k);\sigma_{m,k} = \sigma_{n,k};(\id_k\otimes t)
      \qquad
      \nabla;(\nabla\otimes \id_1) = \nabla;(\id_1\otimes\nabla)
      \\
      && \nabla = \nabla;\sigma  \qquad 
      \nabla;(\id_1\otimes \top) = \id_1 
    \end{eqnarray*}
    \vspace{-0.7cm}
    \mbox{}
  }
  \caption{Operators
    of higher arity (above) and axioms for CC-structured PROPs (below)}
  \label{tab:axioms-cc-prop}
\end{table}

\begin{figure}[H] 
  \centering
  \begin{tikzpicture}[yscale=1]
    \draw[white,  thick] (-1,0) -- ++(0.75,0) node[pos=.5, black]{$\id =$};
    \draw[black,  thick] (0,0) -- ++(1,0);
%    \draw[white,  thick] (-1,-1) -- ++(0.75,0) node[pos=.5, black]{$\id_n =$};
%    \draw[black,  very thick, double] (0,-1) -- ++(1,0) node[pos=.5, above=-.05]{$n$};

    \draw[white,  thick] (2,0) -- ++(0.75,0) node[pos=.5, black]{$\nabla =$};
    \draw[black,  thick] (3,0) -- ++(0.5,0);
    \draw[black,  thick] (3.5,0) -- ++(0.5,0.5);
    \draw[black,  thick] (3.5,0) -- ++(0.5,-0.5);
%    \draw[white,  thick] (2,-2) -- ++(0.75,0) node[pos=.5, black]{$\nabla_n =$};
%    \draw[white,  thick] (3,-2) -- ++(1.25,0) node[pos=.5, black, text width=1cm]{see Table~\ref{tab:axioms-cc-prop}};

    \draw[white,  thick] (5,0) -- ++(0.75,0) node[pos=.5, black]{$\sigma =$};
    \draw[black,  thick] (6,-0.5) -- ++(0.5,0);
    \draw[black,  thick] (6,0.5) -- ++(0.5,0);
    \draw[black,  thick] (6.5,0.5) -- ++(1,-1);
    \draw[black,  thick] (6.5,-0.5) -- ++(1,1);
    \draw[black,  thick] (7.5,-0.5) -- ++(0.5,0);
    \draw[black,  thick] (7.5,0.5) -- ++(0.5,0);

%    \draw[white,  thick] (5,-2) -- ++(0.75,0) node[pos=.5, black]{$\sigma_{n,m} =$};
%    \draw[black,  very thick, double] (6,-1.5) -- ++(0.53,0) node[pos=.5, above=-.05]{$n$};
%    \draw[black,  very thick, double] (6,-2.5) -- ++(0.53,0) node[pos=.5, above=-.05]{$m$};
%    \draw[black,  very thick, double] (7.47,-2.5) -- ++(0.5,0) node[pos=.5, above=-.05]{$n$};
%    \draw[black,  very thick, double] (7.47,-1.5) -- ++(0.5,0) node[pos=.5, above=-.05]{$m$};
%    \draw[black,  very thick, double] (6.5,-1.5) -- ++(1,-1);
%    \draw[black,  very thick, double] (6.5,-2.5) -- ++(1,1);

    \draw[white,  thick] (9,0) -- ++(0.75,0) node[pos=.5, black]{$\top =$};
    \draw[black,  thick] (10,0) -- ++(1,0);
    \draw[black,  thick] (11,.25) -- ++(0,-0.5);
%    \draw[white,  thick] (9,-2) -- ++(0.75,0) node[pos=.5, black]{$\top_n =$};
%    \draw[white,  thick] (10.5,-1.5) -- ++(0,-1) node [black, midway, sloped] {$\dots$};
%    \draw[black,  thick] (10,-1.5) -- ++(1,0);
%    \draw[black,  thick] (11,.-1.25) -- ++(0,-0.5);
%    \draw[black,  thick] (10,-2.5) -- ++(1,0);
%    \draw[black,  thick] (11,.-2.25) -- ++(0,-0.5);
%    \draw [decorate,decoration={brace,amplitude=6pt,mirror,raise=4pt},yshift=0pt] (11,-2.76) -- (11,-1.24)node [black,midway,xshift=0.55cm] {$n$};
  \end{tikzpicture}
  \caption{String diagrammatic equivalents to the
    constants}
  \label{fig:constants}
\end{figure}
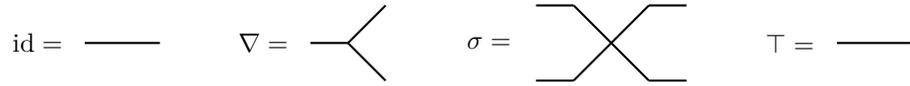

\medskip

Causality graphs (Def.~\ref{def:CG}) form a PROP, in fact the free
CC-structured PROP. They can simply be seen as the string diagram
representation of the arrows of the PROP. The axioms in
Fig.~\ref{tab:axioms-cc-prop} basically describe how to rearrange a
string diagram into an isomorphic one. The constants correspond to
causality graphs as drawn in Fig.~\ref{fig:constants}.

Hence, every causality graph has both a string diagrammatic
representation and is represented by an equivalence class of terms.
For instance the causality graph $B'$ in
Fig.~\ref{fig:bn-variable-elimination} can also be written as a term
$(A\otimes B\otimes C);(D\otimes \id_1);E$, where $A,B,C,D,E$ are the
corresponding matrices (given by $\ev$).

\medskip

Another instance of a CC-structured PROP are (sub-)stochastic
matrices, with entries taken from the closed interval
$[0,1] \subset \mathbb{R}$.

Here, the constants correspond to the following matrices:
\begin{eqnarray*}
  \id_0 = (1) \quad \id =
  \begin{pmatrix}
    1 & 0 \\ 0 & 1
  \end{pmatrix} \quad \nabla =
  \begin{pmatrix}
    1 & 0 \\ 0 & 0 \\ 0 & 0 \\ 0 & 1
  \end{pmatrix} \quad \sigma =
  \begin{pmatrix}
    1 & 0 & 0 & 0 \\ 0 & 0 & 1 & 0 \\ 0 & 1 & 0 & 0 \\ 0 & 0 & 0 & 1
  \end{pmatrix} \quad \top =
  \begin{pmatrix}
    1 & 1
  \end{pmatrix} 
\end{eqnarray*}

We index matrices over $\{0,1\}^m \times \{0,1\}^n$, i.e. for
$\X \in \{0,1\}^m$, $\Y \in \{0,1\}^n$ the corresponding entry
is denoted by $P(\X \mid \Y)$. 
%
% This notation is used to evoke the idea of conditional probability
% (the probability that the first index is equal to $\X$, whenever the
% second index is equal to $\Y$). 
%
The order of rows and columns in the matrix regarding the assignment
of events taking place or not, is descending (see matrix below).

\begin{wrapfigure}{r}{0.282\textwidth}
  \begin{center}
    ~\\[-0ex]
    $
    \begin{matrix}
      11 \\ 10 \\ 01 \\ 00
    \end{matrix}
    \begin{pmatrix}
      % \makebox[0pt]{11} & \makebox[0pt]{10} & \makebox[0pt]{01} & \makebox[0pt]{00} \\
      \    \smash{\raisebox{3ex}{\makebox[0pt]{11}}}
      {\cdot}\  &
      \ {\cdot}\smash{\raisebox{3ex}{\makebox[0pt]{10}}}
      \  & \ {\cdot}\smash{\raisebox{3ex}{\makebox[0pt]{01}}}
      \  & \ {\ \cdot}\smash{\raisebox{3ex}{\makebox[0pt]{00}}}\ \ \  \\
      \ \cdot\  & \ \cdot\  & \ \cdot\  & \ \cdot\ \ \\
      \ \cdot\  & \ \cdot\  & \ \cdot\  & \ \cdot\ \ \\
      \ \cdot\ & \ \cdot\ & \ \cdot\ & \ \cdot\ \
    \end{pmatrix}$\vspace{-2ex}
  \end{center}
\end{wrapfigure}

Sequential composition is \emph{matrix multiplication}, i.e., given
$P\colon n\to m$, $Q\colon m\to \ell$ we define
$P;Q = Q\cdot P\colon n\to \ell$, which is a
$2^\ell\times 2^n$-matrix. The tensor is given by the \emph{Kronecker
  product}, i.e., given $P\colon n_1\to m_1$, $Q\colon n_2\to m_2$ we
define $P\otimes Q\colon n_1+n_2\to m_1+m_2$ as
$(P\otimes Q)(\bitvec{x}_1\bitvec{x}_2\mid \bitvec{y}_1\bitvec{y}_2) =
P(\bitvec{x}_1\mid \bitvec{y}_1)\cdot Q(\bitvec{x}_2\mid
\bitvec{y}_2)$ where $\bitvec{x}_i \in \{0,1\}^{m_i}$,
$\bitvec{y}_i \in \{0,1\}^{n_i}$.

\medskip

A modular Bayesian network (MBN) is a causality graph, where every
generator is intepreted by a (sub-)stochastic matrix via an evaluation
function $\ev$. Given $\ev$, we obtain a mapping $M_\ev$ that
transforms causality graphs into (sub-)stochastic matrices
(Def.~\ref{def:eval}).  Note that $M_\ev$ is compositional, it
preserves constants, generators, sequential composition and
tensor. More formally, it is a functor between symmetric monoidal
categories, preserving also the CC-structure of the PROP. This also
means that evaluating a term via matrix operations and evaluating its
causality graph as described in Def.~\ref{def:eval}, gives the same
result.

\section{Modelling a Test with False Positives and Negatives}
\label{sec:test-fpfn}

\heading{Setup}

Here we analyze the net from Fig.~\ref{fig:pn-test} (discussed in
Ex.~\ref{ex:petri-nets}) that models a test with false positives and
negatives in more detail. We have a random variable $T$ that describes
whether the test is positive ($1$) or negative ($0$).

Remember that there are is one place ($I$, marked when the person is
infected) and three transitions, modelling the following events :
\begin{itemize}
\item $\mathit{flp}$: this transition has no pre- and no
  post-condition.  In the case where $I$ in unmarked, firing this
  transition denotes a false positive. It may also fire if $I$ is
  marked, in which case it stands for a test that actually went wrong
  and is only positive because of luck (lucky positive). Hence
  $p_T(\mathit{flp}) = P(R\mid \bar{I})$.
\item $\mathit{inf}$: this transition has $I$ in its pre- and
  post-condition. Hence it can be fired only if $I$ is marked,
  representing a test that truly -- and not by chance -- uncovers an
  infection. Hence
  $p_T(\mathit{inf}) = P(R\mid I) - P(R\mid \bar{I})$, the
  probability that has to be added to the probability of $\mathit{flp}$
  to obtain the true positive (assuming that the probability for false
  positive is less than the one for true positive).
\item $\failop$: the special fail transition stands for a failed test
  and its probability is $p_T(\failop) = P(\bar{R}\mid I)$ (false
  negative). Note that if the person is not infected also
  $\mathit{inf}$ fails and the sum of the probabilities is
  $p_T(\mathit{inf}) + p_T(\failop) = P(\bar{R}\mid I) + P(R\mid I)
  - P(R\mid \bar{I}) = 1 - P(R\mid \bar{I}) = P(\bar{R}\mid \bar{I})$ (true
  negative), exactly as required.
\end{itemize}

As explained before, we use the independent semantics.

\heading{Success Case}

Now we perform uncertainty reasoning as described in
Sct.~\ref{sec:uncertainty-reasoning} and the initial probability
distribution is
\[ p^0 =
  \begin{pmatrix}
    P(I) \\ P(\bar{I})
  \end{pmatrix} \]
Assume first that we observe success. In this case we have to multiply
$p^0$ with the matrix $P_*$ that is given as follows. 
\[ P_* =
  \begin{pmatrix}
    p_T(\mathit{flp})+p_T(\mathit{inf}) & 0 \\
    0 & p_T(\mathit{flp})
  \end{pmatrix} =
  \begin{pmatrix}
    P(R\mid I) & 0 \\
    0 & P(R\mid \bar{I})
  \end{pmatrix}\]
The first row/column always refers to the case where
$I$ is marked ($I$) and the second row/column to the case where $I$ is
unmarked ($\bar{I}$). Hence, the first entry in the diagonal gives us
the probability of going from marking $I$ to itself and the second
entry the probability of staying in the empty marking.

Hence by multiplying $P_*\cdot p^0$ we obtain
\[
  P_*\cdot p^0 = 
  \begin{pmatrix}
    P(R\mid I) & 0 \\
    0 & P(R\mid \bar{I})
  \end{pmatrix}\cdot
  \begin{pmatrix}
    P(I) \\ P(\bar{I})
  \end{pmatrix} =
  \begin{pmatrix}
    P(R\mid I)\cdot P(I) \\
    P(R\mid \bar{I})\cdot P(\bar{I})
  \end{pmatrix}
   = 
  \begin{pmatrix}
    P(R \cap I) \\
    P(R \cap \bar{I})
  \end{pmatrix}
\]
We normalize by dividing by
$P(R \cap I) + P(R \cap \bar{I}) = P(R)$ and get, using again
the definition of conditional probability:
\begin{eqnarray*}
  \norm(P_*\cdot p^0) & = &
  \begin{pmatrix}
    P(I\mid R) \\
    P(\bar{I}\mid R)
  \end{pmatrix}
\end{eqnarray*}

\heading{Failure Case}

We now switch to the case where failure is observed. In this case we
have to multiply $p^0$ with the matrix $F_*$ that is given as follows:
\[ F_* =
  \begin{pmatrix}
    p_T(\failop) & 0 \\
    0 & p_T(\mathit{inf})+p_T(\failop)
  \end{pmatrix} =
  \begin{pmatrix}
    P(\bar{R}\mid I) & 0 \\
    0 & P(\bar{R}\mid \bar{I})
  \end{pmatrix}\] The first entry in the diagonal gives us the
probability of failing from marking $I$ and the second entry
the probability of failing from the empty marking

Hence by multiplying $F_*\cdot p^0$ we obtain
\[
  F_*\cdot p^0 = 
  \begin{pmatrix}
    P(\bar{R}\mid I) & 0 \\
    0 & P(\bar{R}\mid \bar{I})
  \end{pmatrix}\cdot
  \begin{pmatrix}
    P(I) \\ P(\bar{I})
  \end{pmatrix} = 
  \begin{pmatrix}
    P(\bar{R}\mid I)\cdot P(I) \\
    P(\bar{R}\mid \bar{I})\cdot P(\bar{I})
  \end{pmatrix}
   = 
  \begin{pmatrix}
    P(\bar{R} \cap I) \\
    P(\bar{R} \cap \bar{I})
  \end{pmatrix}
\]
We normalize by dividing by
$P(\bar{R} \cap I) + P(\bar{R} \cap \bar{I}) = P(\bar{R})$ and get, using again
the definition of conditional probability:
\begin{eqnarray*}
  \norm(F_*\cdot p^0) & = &
  \begin{pmatrix}
    P(I\mid \bar{R}) \\
    P(\bar{I}\mid \bar{R})
  \end{pmatrix}
\end{eqnarray*}

Hence we obtain the probabilities that the person is infected
respectively not infected under the condition that the test is
positive respectively negative, exactly as required. This shows that
our formalism is expressive enough to model the standard testing
scenario with false postitives and negatives.

\section{Comparison to Term Width}
\label{sec:termwidth}

We here compare the notion of elimination to another notion of
width: term width, a very natural notion, since we are working in a
PROP.

Given a representation of the causality graph of a Bayesian network
$B$ as a term $t$, the width of $t$ is intuitively the size of the
largest matrix that occurs when evaluating $t$. In fact, an arrow of
type $m\to n$ corresponds to a matrix of dimensions $2^n\times 2^m$
where $2^n\cdot 2^m = 2^{n+m}$. As before, we will give the width or
size of the matrix as $m+n$.

\begin{defi}[Term width]
  Let $t$ be a term of a CC-structured PROP. Then we inductively
  define the \emph{term width} of $t\colon n\to m$, denoted by
  $\llangle t\rrangle$, as follows:
  \begin{itemize}
  \item Whenever $t$ is a generator or a constant ($\top$, $\sigma$,
    $\nabla$, $\mathit{id}$), the width of $t$, is the size of the
    corresponding matrix: $\llangle t\rrangle = n+m$.
  \item
    $\llangle t_1;t_2 \rrangle = \max\{\llangle t_1\rrangle, \llangle
    t_2\rrangle, n+m\}$.
  \item
    $\llangle t_1\oplus t_2 \rrangle = \max\{\llangle t_1\rrangle,
    \llangle t_2\rrangle, n+m\}$.
  \end{itemize}
  The term width of a causality graph $B$, denoted by
  $\llangle B\rrangle$, is the minimum width of a term that represents
  $B$.
\end{defi}

That is, we compute the sizes of all the matrices that we encounter
along the way and take the maximum of these sizes. As discussed
earlier, it is essential to choose a good term representation of a
Bayesian network in order to obtain small matrix sizes and hence an
efficient evaluation.

%\bigskip

However, the elimination width and term width of a causality graph do
not necessarily coincide. This suggests that term width does not
provide suitable bounds for the actual computations. A direction of
future research is to come up with an alternative notion that better
relates the size of a term with the efficiency of its ``computation
recipe''.

However, we still obtain an upper bound for the elimination width,
by viewing every matrix as a factor.

\begin{restatable}{prop}{PropBNElimTreewidth}
  \label{prop:bn-elim-treewidth}
  There is a causality graph whose elimination width is strictly
  smaller than $\llangle B\rrangle$ and vice versa.
  Let $B$ be a causality graph of type $n\to m$. Then the elimination
  width of $B$ is bounded by $2\cdot \llangle B\rrangle$.
\end{restatable}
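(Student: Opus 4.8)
The plan is to handle the proposition's two assertions separately: first exhibit causality graphs realising each of the two strict inequalities, then prove $\text{(elimination width)}\le 2\cdot\llangle B\rrangle$ by structural induction on an optimal term.

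For the inequality in which elimination width is the smaller quantity, I would take $B$ to be the causality graph of $g_1\otimes g_2$ with $g_1,g_2$ generators of type $1\to 1$. Its moralisation $U_0$ is a disjoint union of two edges and $B$ has no internal wires, so its elimination width is $2$; on the other hand, since $B$ has two nodes, any term representing it is non-atomic (a generator or constant gives a causality graph with at most one node), hence of the form $t_1;t_2$ or $t_1\otimes t_2$ of type $2\to 2$, and both composition clauses of the term-width definition then give $\llangle B\rrangle\ge 2+2=4$. For the reverse inequality I would take, for $n$ large, the causality graph $B_n$ of type $0\to 0$ built from $n$ ``vertex'' generators $a_i\colon 0\to 1$ and $\binom n2$ ``edge'' generators $e_{ij}\colon 2\to 0$ whose two inputs are the outputs of $a_i$ and $a_j$; then $U_0\cong K_n$, so every elimination ordering of its (internal) wires produces at its first step a clique of size $n-1$, giving elimination width at least $n-1$. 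In contrast, an optimal branch decomposition of $K_n$, which has branchwidth $\lceil 2n/3\rceil$, can be compiled into a term for $B_n$ of width $\lceil 2n/3\rceil+O(1)$, which is strictly below $n-1$ once $n$ is large enough; here the ``vertex'' generators are precisely what makes the overall type $0\to 0$ and thereby removes the otherwise unavoidable lower bound $\llangle B_n\rrangle\ge n$.

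For the bound, fix a term $t$ representing $B$ with $\llangle t\rrangle=\llangle B\rrangle=:w$ and show by induction on $t$ that the internal wires of the corresponding causality graph admit an elimination ordering all of whose fill-in cliques have size at most $2w$. If $t$ is a single generator or constant of type $p\to q$, then there are no internal wires and the only clique of $U_0$ has size $p+q=\llangle t\rrangle\le 2w$. If $t=t_1\otimes t_2$, then $B$ is the disjoint union of the sub-graphs $B_1,B_2$ of $t_1,t_2$; concatenating the inductive orderings for $B_1$ and then $B_2$ works, since no fill-in ever crosses between the two parts, giving width $\le\max\{2\llangle t_1\rrangle,2\llangle t_2\rrangle\}\le 2w$. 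If $t=t_1;t_2$ with $t_1\colon n\to k$ and $t_2\colon k\to m$, then $B=B_1;B_2$ shares precisely the $k$ ``connecting wires'' (outputs of $B_1$ identified with inputs of $B_2$), and its internal wires are those of $B_1$, those of $B_2$, and -- outside degenerate identity-like cases -- the connecting wires; I would eliminate them in exactly that order. The crucial point is that an internal wire of $B_1$ touches only nodes of $B_1$, so phase one runs identically to the elimination of $B_1$ on its own and contributes width $\le 2\llangle t_1\rrangle\le 2w$; symmetrically for phase two; and after these, each connecting wire is adjacent only to other connecting wires, to the $\le n$ global inputs inside $B_1$, and to the $\le m$ global outputs inside $B_2$, so its fill-in clique has at most $(k-1)+n+m+1=(n+k)+m\le 2w$ vertices, using $n+k\le\llangle t_1\rrangle\le w$ and $m\le n+m\le w$.

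The step I expect to be the main obstacle is the bookkeeping in the sequential case: pinning down the exact partition of the wire set of $B_1;B_2$ and checking, from the definition of sequential composition of causality graphs, that eliminating the internal wires of $B_1$ inside the larger graph genuinely does not interact with $B_2$ (because such wires are neither produced nor consumed by any $B_2$-node). If one instead reads the width as the largest clique occurring in any $U_i$, one must additionally observe that the cliques of $U_0$ lying entirely in the still-untouched $B_2$ part have size at most the elimination width of $B_2$, which the induction already controls. For the separating example the one remaining routine task is compiling a branch decomposition of $K_n$ into a term of the claimed width, which I would carry out by induction over the decomposition tree.
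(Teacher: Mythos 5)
Your first separating example ($g_1\otimes g_2$ with $g_1,g_2\colon 1\to 1$; elimination width $2$, while any representing term is composite of type $2\to 2$ and hence has width at least $4$) is correct and even simpler than the paper's witness, and your proof of the bound ``elimination width $\le 2\cdot\llangle B\rrangle$'' is essentially the paper's argument: induction over an optimal term, eliminating first the internal wires of $B_1$, then those of $B_2$, then the $k$ connecting wires. You only differ in the bookkeeping: you bound cliques directly (getting $n+k+m\le 2\llangle t\rrangle$ in the sequential case), whereas the paper tracks factor sizes along Algorithm~\ref{alg:variable-elimination} with a strengthened hypothesis on the final factors (getting $n+2k+m-1\le\llangle t_1\rrangle+\llangle t_2\rrangle$); both are fine, and your clique-based accounting matches Def.~\ref{def:elimination-order} a little more directly, provided you do spell out the degenerate cases and the observation about untouched $B_2$-cliques that you flag.

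The gap is in the other separating example, the direction $\llangle B\rrangle$ strictly below the elimination width. For your $B_n$ (generators $a_i\colon 0\to 1$ and $e_{ij}\colon 2\to 0$) the moralisation $U_0$ is already $K_n$, so the elimination width is exactly $n$ (your ``$n-1$'' is an underestimate), and you must exhibit a term of width at most $n-1$. The step you defer as routine -- compiling a branch decomposition of $K_n$ of width $\lceil 2n/3\rceil$ into a term of width $\lceil 2n/3\rceil+O(1)$ -- is precisely where the argument breaks: term width charges \emph{both} inputs and outputs of every subterm, so the natural compilations lose a constant factor you cannot afford. Tensoring two chunks whose boundaries have sizes $b_1,b_2$ (types $b_i\to 0$) creates a subterm of size $b_1+b_2\approx 4n/3$; applying a single edge generator while $q$ wires stay live is a subterm $\id_q\otimes(\dots)$ of size about $2q$; reordering $q$ interface wires with $\sigma$'s costs $2q$, and middle sets have size up to $2n/3$. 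A balanced-separator argument on the edge generators only forces width roughly $n/\sqrt 3$, so whether $\llangle B_n\rrangle$ really drops below $n-1$ is a delicate combinatorial question that your sketch does not answer; as it stands, strictness is not established. The paper sidesteps all of this with a much simpler witness: $B=A;C$ with two generators $A,C\colon k\to k$, so that $\llangle B\rrangle=2k$ (the generators themselves force this, and the obvious term achieves it), while eliminating any one middle wire immediately creates a clique on the remaining $3k-1$ wires, giving elimination width $3k-1>2k$ for $k>1$. Either adopt such an example or actually carry out your compilation and verify its width; the latter is not a routine induction over the decomposition tree.
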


% \todo[inline]{Ba: Maybe we can even achieve
%   $2\cdot \llangle B\rrangle-1$}

\section{Proofs}
\todo{\textbf{CONCUR Rev2:} In case the paper would be published without the 
appendix, I would be worried about its accessibility.}

% \PropMarkovReachQPrime*

% \begin{proof}
%   $P(X_n = q \land X_{n+1} \in Q') = \sum_{q'\in Q'} P(X_n = q \land
%   X_{n+1} = q') = \sum_{q'\in Q'} P(X_n=q)\cdot P(X_{n+1}=q'\mid
%   X_n=q) = p^n(q) \cdot \sum_{q'\in Q'} P^n(q'\mid q) = p^n(q) \cdot
%   F^n(q\mid q) = (F^n\cdot p^n)(q)$. 
% \end{proof}

\PropInterpretationProb*

\begin{proof}~
  
  \begin{itemize}

  \item $P(X_{n+1} = m' \mid X_{n+1}\neq *, X_n \neq *)$:

    Whenever $X_{n+1}\neq *$, we automatically have $X_n\neq *$, since
    one cannot leave the fail state. Hence:
    \begin{eqnarray*}
      && P(X_{n+1} = m' \mid X_{n+1}\neq *, X_n \neq *) = P(X_{n+1} = m' \mid
        X_{n+1}\neq *) \\
      & = & \frac{P(X_{n+1}=m \land X_{n+1}\neq *)}{P(X_n\neq *)} =
      \frac{P(X_{n+1}=m)}{\sum_{m'} P(X_{n+1}=m')} \\
      & = & \frac{p^{n+1}(m)}{\sum_{m'} p^{n+1}(m')} 
      = \norm(p^{n+1}_*)(m') = \norm((P^n\cdot
      p^n)_*)(m') = \norm(P^n_*\cdot p^n_*)(m')
    \end{eqnarray*}
  \item $P(X_n = m \mid X_{n+1} = *, X_n \neq *)$:

    We first observe that:
    \begin{eqnarray*}
      && P(X_n = m \land X_{n+1} = *) = P(X_n = m)\cdot P(X_{n+1}=*
      \mid X_n=m) \\
      & = & p^n(m) \cdot P^n(*\mid m) = p^n(m) \cdot F^n(m\mid
      m) = (F^n\cdot p^n)(m)
    \end{eqnarray*}

    % \begin{eqnarray*}
    %   && \prob(\{(m,t) \mid t\in T_f,
    %   m\not\overset{t}{\Rightarrow}\}) = \sum_{t\in T_f, m\not
    %     \overset{t}{\Rightarrow}} \prob(\{(m,t)\}) = \sum_{t\in T_f, m\not
    %     \stackrel{t}{\Rightarrow}}
    %   p^n(m)\cdot r_n(m,t) \\
    %   & = & p^n(m)\cdot \sum_{t\in T_f, m\not \stackrel{t}{\Rightarrow}}
    %   r_n(m,t) = p^n(m)\cdot F^n(m,m) = (F^n\cdot p^n)(m) \\
    %   & = & P(X_n = m \land X_{n+1}=*)
    % \end{eqnarray*}
    From this we can derive:
    \begin{eqnarray*}
      && P(X_{n+1} = * \land X_n\neq *) = P(X_{n+1}=*\land
      \Big(\bigvee_{m'}
      X_n = m'\Big)) \\
      & = & P(\bigvee_{m'} (X_{n+1} = * \land X_n = m')) = \sum_{m'}
      P(X_{n+1} = * \land X_n = m') = \sum_{m'} (F^n\cdot p^n)(m')
    \end{eqnarray*}
    The second-last equality holds, since the events are disjoint. And
    so finally we obtain:
    \begin{eqnarray*}
      && P(X_n = m \mid X_{n+1} = *, X_n \neq *) = \frac{P(X_n=m \land
        X_{n+1} = *\land X_n \neq *)}{P(X_{n+1} = * \land
        X_n\neq *)} \\
      & = & \frac{P(X_n=m \land X_{n+1} = *)}{P(X_{n+1} = * \land
        X_n\neq *)} = \frac{(F^n\cdot
        p^n)(m)}{\sum_{m'} (F^n\cdot p^n)(m')} \\
      & = & \norm((F^n\cdot p^n)_*)(m') =
      \norm(F^n_*\cdot p^n_*)(m')
    \end{eqnarray*}
  \end{itemize}
\end{proof}

\PropPFMatrices*

\begin{proof}
  Let $m,m'$ be two markings which split into $m=m_1m_2$,
  $m'=m'_1m'_2$.
  \begin{itemize}
  \item $P^n_* = P'\otimes \Id_{2^{k-\ell}}$:
    
    \begin{eqnarray*}
      && P^n_*(m'\mid m) = \sum_{t, m\stackrel{t}{\Rightarrow} m'}
      r_n(m,t) = \sum_{t\in\bar{T}, m\stackrel{t}{\Rightarrow} m'}
      r_n(m,t) = \sum_{t\in\bar{T}, m\stackrel{t}{\Rightarrow} m'}
      \bar{r}(m_1,t) \\
      & = & \sum_{t\in\bar{T}, m_1\stackrel{t}{\Rightarrow} m'_1}
      \bar{r}(m_1,t)\cdot \Id_{2^{k-\ell}}(m'_2\mid m_2) \\
      & = & \Big(\sum_{t\in\bar{T}, m_1\stackrel{t}{\Rightarrow}
        m'_1} \bar{r}(m_1,t)\Big)\cdot \Id_{2^{k-\ell}}(m'_2\mid m_2) 
      = P'(m'_1\mid m_1) \cdot
      \Id_{2^{k-\ell}}(m'_2\mid m_2) \\
      & = & (P'\otimes \Id_{2^{k-\ell}})(m'\mid m)
    \end{eqnarray*}
    The second equality holds because only transitions of $\bar{T}$ can
    fire in $m$. The fourth equality is true since there is a
    transition $m\stackrel{t}{\Rightarrow} m'$ if and only if
    $m_1\stackrel{t}{\Rightarrow} m'_1$ and $m_2 = m'_2$.
  \item $F^n_* = F'\otimes \Id_{2^{k-\ell}}$: Here we
    distinguish two cases: if $m=m'$ then
    
    \begin{eqnarray*}
      && F^n_*(m'\mid m) = \sum_{t\in T_f, m\not
        \stackrel{t}{\Rightarrow}} r_n(m,t) = \sum_{t\in\bar{T},
        m\not\stackrel{t}{\Rightarrow}} r_n(m,t) = \sum_{t\in\bar{T},
        m\not\stackrel{t}{\Rightarrow}}
      \bar{r}(m_1,t)\cdot 1 \\
      & = & \sum_{t\in\bar{T}, m_1\not \stackrel{t}{\Rightarrow}}
      \bar{r}(m_1,t)\cdot \Id_{2^{k-\ell}}(m'_2\mid m_2)
      = F'(m'_1\mid m_1)\cdot \Id_{2^{k-\ell}}(m'_2\mid m_2) \\
      & = & (F'\otimes \Id_{2^{k-\ell}})(m'\mid m)
    \end{eqnarray*}
    
    In the other case ($m\neq m'$) we have
    \[
      F^n_*(m'\mid m) = 0 = F'(m'_1\mid m_1) \cdot
      \Id_{2^{k-\ell}}(m'_2\mid m_2) = (F'\otimes
      \Id_{2^{k-\ell}})(m'\mid m)
    \]
    Note that the second equality holds since whenever $m\neq m'$ we have
    $m_1\neq m'_1$ (and so $F'(m'_1\mid m_1) = 0$) or $m_2\neq m'_2$ (and so
    $\Id_{2^{k-\ell}}(m'_2\mid m_2) = 0$).
  \end{itemize}
\end{proof}

\PropVariableElimination*

\begin{proof}
  We first show that we obtain the correct result. For this, we define
  the subprobability distribution $p_\mathcal{F}$ associated to a
  multiset of factors $\mathcal{F}$. Let
  $\mathit{IW}_B = \{w_1,\dots,w_k\}$ be the set of internal wires
  and we fix the elimination ordering $w_1,\dots,w_k$.

  Then we define $p_\mathcal{F}\colon \{0,1\}^{n+m}\to [0,1]$ with:
  \[ p_\mathcal{F}(\X\Y) = \sum_{\Z\in\{0,1\}^k}
    \prod_{(f,\tilde{w}^f)\in \mathcal{F}}
    f(b_{\tilde{\iota}\tilde{o}\tilde{w},\X\Y\Z}(\tilde{w}^f)) \]
  where $\X\in \{0,1\}^n$, $\Y\in\{0,1\}^m$,
  $\tilde{\iota} = i_1\dots i_n$ and
  $\tilde{o} = \out(o_1)\dots \out(o_m)$. Clearly $p_{\mathcal{F}_0}$
  corresponds to $M_\ev(B)$ (see Def.~\ref{def:eval}), that is
  $p_{\mathcal{F}_0}(\X\Y) = M_\ev(B)(\Y\mid\X)$. Furthermore the
  result $p$ of the algorithm equals $p_{\mathcal{F}_k}$, since in
  this case $\Z$ is empty.

  We only have to show that in each step
  $p_{\mathcal{F}_{i-1}} = p_{\mathcal{F}_i}$. This can be seen by
  observing that -- due to distributivity -- we have:
  \begin{eqnarray*}
    && p_{\mathcal{F}_{i-1}}(\X\Y) \\
    & = & \sum_{\z_k\in\{0,1\}} \dots \sum_{\z_i\in\{0,1\}}
    \prod_{(f,\tilde{w}^f)\in \mathcal{F}_{i-1}}
    f(b_{i-1}(\tilde{w}^f)) \\
    & = & \sum_{\z_k\in\{0,1\}} \dots \sum_{\z_i\in\{0,1\}}
    \Big(\prod_{(h,\tilde{w}^h)\in \mathcal{F}_{i-1}\backslash
      C_{w_i}(\mathcal{F}_{i-1})} h(b_{i-1}(\tilde{w}^h))\Big) \cdot \Big(
    \prod_{(g,\tilde{w}^g)\in C_{w_i}(\mathcal{F}_{i-1})}
    g(b_{i-1}(\tilde{w}^g)) \Big) \\
    & = & \sum_{\z_k\in\{0,1\}} \dots \sum_{\z_{i+1}\in\{0,1\}}
    \Big(\prod_{(h,\tilde{w}^h)\in \mathcal{F}_{i-1}\backslash
      C_{w_i}(\mathcal{F}_{i-1})} h(b_i(\tilde{w}^h))\Big) \\
    && \qquad\qquad\qquad\qquad \mathop{\cdot} \Big(
    \underbrace{\sum_{\z\in \{0,1\}} \prod_{(g,\tilde{w}^g)\in
        C_{w_i}(\mathcal{F}_{i-1})}
      g(b_i(\tilde{w}^g))}_{f(b_i(\tilde{w}^f))} \Big) \\
    & = & \sum_{\z_k\in\{0,1\}} \dots \sum_{\z_{i-1}\in\{0,1\}}
    \prod_{(f,\tilde{w}^f)\in \mathcal{F}_i} f(b_i(\tilde{w}^f)) \\
    & = & p_{\mathcal{F}_i}(\X\Y),
  \end{eqnarray*}
  where $(f,\tilde{w}^f)$ is the new factor, produced in step~$i$ of
  the algorithm.  Here we use functions
  $b_i = b_{\tilde{\iota}\tilde{o}w_{k-i+1}\dots
    w_k,\Y\X\z_{k-i+1}\dots\z_k}$
  
  % We use bitstrings $\Y^h$ of length with $|\tilde{w}^h|$ where
  % $\Y^h_q = \y_j$ if $\tilde{w}^h_q = i_j$ and $\Y^h_q = \z_j$
  % if $\tilde{w}^h_q = w_j$ (analogously for $g$).
  
  \medskip

  Furthermore observe that for every factor
  $(f,\tilde{w}^f)\in \mathcal{F}_i$, all the wires in $\tilde{w}^f$
  are connected via an edge in $U_i$, i.e., all those wires are part
  of a clique. This means that the size of the factors is bounded by
  the elimination width. We show this by induction on $i$.
  \begin{itemize}
  \item $i=0$: $\mathcal{F}_0$ contains those factors that correspond
    to the generators originally contained in $B$. Each of these
    generators induces a factor $(f,\tilde{w}^f)$ and a clique
    containing all vertices in $\tilde{w}^f$ in $U_0$.
  \item $i\to i+1$: In step~$i$ we eliminate wire $w_i$ and produce a
    new factor $(f,\tilde{w}^f)$. In order to produce this new factor,
    we multiply factors of $\mathcal{F}_{i-1}$. Furthermore, we obtain
    a graph $U_i$ that contains a clique of all all vertices in
    $\tilde{w}^f$. For the factors that we keep, the corresponding parts
    of the graph are unchanged and hence the corresponding cliques
    remain.
  \end{itemize}
\end{proof}

\PropTreewidthBN*

\begin{proof}
  First, by comparing Def.~\ref{def:treewidth-bn} and the
  definition of treewidth for undirected graphs from the literature
  \cite{bk:treewidth-computations-I}, one observes that the treewidth
  of a causality graph $B$ coincides with the treewidth of its
  undirected clique graph $U_0$ constructed above.

  Furthermore, according to
  \cite[Theorem~6]{bk:treewidth-computations-I}, the fact that $U_0$
  has treewidth $k$ is equivalent to the existence of an elimination
  order as in Def.~\ref{def:elimination-order}, where the
  highest clique in the graph sequence $U_i$ is bounded by $k$.

  Furthermore, as described in \cite{bk:treewidth-computations-I},
  every elimination order gives rise to a tree decomposition of the
  same width. Hence elimination width provides an upper bound for
  treewidth.

  \begin{figure}
    \centering
    \scalebox{0.5}{\input{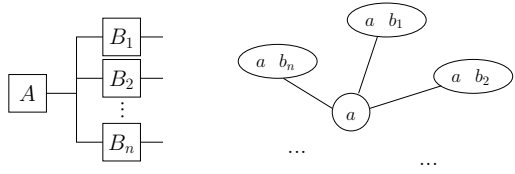}}
    \caption{A causality graph $B$ with $n$ output wires whose
      treewidth is strictly smaller than its elimination width}
    \label{fig:bn-treewidth}
  \end{figure}
  
  In order to show that the treewidth of a causality graph of type
  $0\to m$ may be strictly smaller than the elimination width,
  consider the network shown in Fig.~\ref{fig:bn-treewidth} (left): we
  only have to eliminate one internal wire, the wire $a$ that exits
  node $A$. By eliminating it, we obtain an $n$-clique of the output
  wires and thus we have elimination width~$n$. However, we have a
  star-shaped tree decomposition of width~$1$ (shown in
  Fig.~\ref{fig:bn-treewidth} on the right).
\end{proof}

% \PropElimTreeTermWidth*

% \begin{proof}
%   This is an immediate corollary of Prop.~\ref{prop:treewidth-bn}.
% \end{proof}

\PropBNElimTreewidth*

\begin{proof}
  We first show that the elimination width of $B$ can be strictly
  smaller than $\llangle B\rrangle$, as observed by the following
  example (see Fig.~\ref{fig:bn-elim-treewidth}).

  The elimination width is $2$: if we denote the wires of the network
  by $a_1,a_2$ (the two wires originating from $A$) and $c$ (wire
  originating from $C$), then the joint distribution can be obtained
  as:
  \[ p(\a_1,\c) = \sum_{\a_2\in \{0,1\}} A(\a_1,\a_2)\cdot C(\c), \]
  where the largest factor that is involved is of size $2$.

  However, there is no way to represent this causality graph by a
  term $t$ of width $2$: we have to start with $A$ (of type $0\to 2$)
  and multiply it with some other matrix. Since we have to produce a
  matrix of type $0\to 2$ in the end, this other matrix has to have
  type $2\to 2$ (for instance $\id_1\otimes C$), which results in
  width $4$.\todo{Ba: check this argument. Can it be made more
    precise?}

  \medskip\hrule\medskip

  On the other hand, $\llangle B\rrangle$ can be strictly smaller than
  the elimination width of $B$, as observed by the causality graph in
  Fig.~\ref{fig:bn-tree-elim-width}, where we compose (multiply) two
  matrices $A,C$ of type $k\times k$.

  The largest matrix encountered during the computation is hence of
  size $k+k = 2k$. On the other hand, the width of the elimination
  ordering is $3k-1$: if we eliminate one of the inner wires, we
  immediately obtain a clique containing the remaining $3k-1$ vertices in
  $U_1$ (since every inner wire is connected to all the other wires).

  \begin{figure}
    \captionsetup[subfigure]{justification=centering}
    \begin{subfigure}[b]{0.5\textwidth}
      \centering \scalebox{0.5}{\input{bn-elim-tree-width-x.tex}}
      \caption{A causality graph $B$ whose elimination width is
        strictly smaller than $\llangle B\rrangle$}
      \label{fig:bn-elim-treewidth}
    \end{subfigure}
    \begin{subfigure}[b]{0.5\textwidth}
      \centering
      \scalebox{0.5}{\input{bn-tree-elim-width-x.tex}}
      \caption{A causality graph $B$ where $\llangle B\rrangle$ is
        strictly smaller than the elimination width}
      \label{fig:bn-tree-elim-width}
    \end{subfigure}
    \caption{}
  \end{figure}

  \medskip\hrule\medskip
  
  Assume that $B$ is represented by a term $t$, where
  $\llangle t\rrangle \le \llangle B\rrangle$. From $t$ we inductively
  derive an elimination order $\mathit{eo}(t)$ for the inner wires of
  $B$:
  \begin{itemize}
  \item whenever $t$ is a generator or a constant, the elimination
    order is the empty sequence (since there are no inner wires).
  \item whenever $t = t_1\otimes t_2$, then
    $\mathit{eo}(t) = \mathit{eo}(t_1) \mathit{eo}(t_2)$ (the
    concatenation of the elimination orderings)
  \item whenever $t = t_1;t_2$, then let $\tilde{w}$ be the sequence
    of wires that becomes internal due to the composition. Then
    $\mathit{eo}(t) = \mathit{eo}(t_1) \mathit{eo}(t_2) \tilde{w}$.
  \end{itemize}
  This gives us an ordering of all the inner wires of $B$.

  Now, for a given evaluation map $\ev$, we compute $M_\ev(B)$ according
  to the elimination order $\mathit{eo}(t)$ and prove by structural
  induction on $t$ that the elimination width of $B$ is bounded by
  $2\cdot\llangle t\rrangle$. In fact we use a stronger induction
  hypothesis where we show in addition that in the multiset of factors
  that we obtain at the very end, every factor has size at most
  $\llangle t\rrangle$.
  \begin{itemize}
  \item whenever $t$ is a generator or a constant, we have that the
    elimination width corresponds to the size of the largest
    generator. Hence, for a generator $g$ we have that the elimination
    width equals $\llangle t\rrangle\le 2\cdot \llangle t\rrangle$,
    whereas for the other constants, we have elimination width
    $0 \le 2\cdot \llangle t\rrangle$.
  \item whenever $t = t_1\otimes t_2$, we know by induction hypothesis
    that the elimination width of the causality graph $B_i$,
    represented by $t_i$, is bounded by $2\cdot \llangle t_i\rrangle$
    and that in the multiset of factors that we obtain at the end,
    every factor has size at most $\llangle t_i\rrangle$.

    This means that for $B = B_1\otimes B_2$ the algorithm starts with
    the disjoint union of factors of $B_1$ and $B_2$. If we follow the
    elimination order $\mathit{eo}(t_1) \mathit{eo}(t_2)$, we first
    process the factors of $B_1$, followed by the factors of $B_2$. No
    factor that is produced exceeds
    $\max\{2\cdot \llangle t_1\rrangle, 2\cdot \llangle t_2\rrangle\}
    = 2\cdot \max\{\llangle t_1\rrangle, \llangle t_2\rrangle\} \le
    2\cdot \llangle t\rrangle$ and in the end we obtain factors whose
    size is bounded by
    $\max\{\llangle t_1\rrangle, \llangle t_2\rrangle\} \le \llangle
    t\rrangle$.
  \item whenever $t = t_1;t_2$ (where $t_1\colon n\to k$,
    $t_2\colon k\to m$), we know by induction hypothesis that the
    elimination width of the causality graph $B_i$, represented by
    $t_i$, is bounded by $2\cdot \llangle t_i\rrangle$ and that in the
    multiset of factors that we obtain at the end, every factor has
    size at most $\llangle t_i\rrangle$.

    This means that for $B = B_1;B_2$ the algorithm starts with the
    disjoint union of factors of $B_1$ and $B_2$. If we follow the
    elimination order $\mathit{eo}(t_1) \mathit{eo}(t_2)$, we first
    process the factors of $B_1$, followed by the factors of $B_2$. As
    above, no factor that is produced exceeds
    $\max\{2\cdot \llangle t_1\rrangle, 2\cdot \llangle t_2\rrangle\}
    \le 2\cdot \llangle t\rrangle$ and in the end we obtain factors
    whose size is bounded by
    $\max\{\llangle t_1\rrangle, \llangle t_2\rrangle\} \le \llangle
    t\rrangle$.

    Next, we have to eliminate the wires in $\tilde{w}$. The largest
    factor that can be produced in this way is of size
    $n+k+k+m-1 \le \llangle t_1\rrangle + \llangle t_2 \rrangle \le
    2\cdot \max\{\llangle t_1\rrangle, \llangle t_2\rrangle\} \le
    2\llangle t\rrangle$. At the very end, once we have eliminated all
    the wires, we obtain factors whose size is at most
    $n+m \le \llangle t\rrangle$.   
  \end{itemize}
\end{proof}

}

\end{document}